%% file: neurips_2026.tex
\documentclass{article}

\PassOptionsToPackage{numbers, compress}{natbib}
\usepackage[preprint]{neurips_2026}
\usepackage{wrapfig}
\usepackage{pifont}
\newcommand{\cmark}{\ding{51}}
\newcommand{\xmark}{\ding{55}}
\usepackage{subcaption}
\usepackage{booktabs}
\usepackage[table]{xcolor}
\usepackage{multirow}
\usepackage[utf8]{inputenc} % allow utf-8 input
\usepackage[T1]{fontenc}    % use 8-bit T1 fonts
\usepackage{hyperref}       % hyperlinks
\usepackage{url}            % simple URL typesetting
\usepackage{booktabs}       % professional-quality tables
\usepackage{amsfonts}       % blackboard math symbols
\usepackage{nicefrac}       % compact symbols for 1/2, etc.
\usepackage{microtype}      % microtypography
\usepackage{xcolor}         % colors
\usepackage{amsmath}
\newtheorem{definition}{Definition}

\newtheorem{proof}{Proof}

\newtheorem{proposition}{Proposition}
\usepackage{multirow}
\usepackage{graphicx}
\usepackage{algorithm}
\usepackage{algpseudocode}
\title{Beyond Factor Aggregation: Gauge-Aware Low-Rank Server Representations for Federated LoRA}

\author{%
  Jinqian Chen \\
  School of Software Engineering\\
  Xi'an Jiaotong University\\
  Xi'an, China 710000 \\
  \texttt{chenjinqian@stu.xjtu.edu.cn} \\
  \And
  Chang Liu \\
  School of Software Engineering\\
  Xi'an Jiaotong University\\
  Xi'an, China 710000 \\
  \texttt{chang.liu@stu.xjtu.edu.cn} \\
  \And
  Jihua Zhu\thanks{Corresponding author.}\\
  School of Software Engineering\\
  Xi'an Jiaotong University\\
  Xi'an, China 710000 \\
  \texttt{zhujh@xjtu.edu.cn} \\
}

\begin{document}

\maketitle

\begin{abstract}
Federated LoRA enables parameter-efficient adaptation of large language models under decentralized data and limited client resources.However, directly averaging LoRA factors is representation-dependent: the same intrinsic update admits infinitely many gauge-equivalent factorizations, so factor-level aggregation can change under arbitrary coordinate choices while the underlying update remains unchanged. This reveals a semantic mismatch in existing federated LoRA aggregation rules. We propose \textbf{GLoRA}, a gauge-aware server representation for federated LoRA.Instead of aggregating raw factors, GLoRA estimates a consensus update subspace from client projectors and aggregates client updates in shared reference coordinates, thereby representing semantic update aggregation entirely in low-rank form. To support heterogeneous client capacities, GLoRA further provides a rank-compatible readout that instantiates adapters of different ranks from the same server state without dense update reconstruction. Experiments on GLUE and SuperNI show that GLoRA consistently outperforms federated LoRA baselines under data, resource, and task heterogeneity, including heterogeneous client ranks, sparse participation, larger backbones, and unseen-task evaluation. GLoRA also achieves a favorable efficiency--performance trade-off, suggesting that effective federated LoRA requires not merely averaging low-rank factors, but defining a semantically meaningful server-side representation for aggregation.
\end{abstract}

\input{sections/1_introduction}
\input{sections/2_related_work}

\input{sections/3_motivation}

\input{sections/4_method}
\input{sections/5_experiments}

% \section{Technical appendices and supplementary material}
% Technical appendices with additional results, figures, graphs, and proofs may be submitted with the paper submission before the full submission deadline (see above). You can upload a ZIP file for videos or code, but do not upload a separate PDF file for the appendix. There is no page limit for the technical appendices. 

% Note: Think of the appendix as ``optional reading'' for reviewers. The paper must be able to stand alone without the appendix; for example, adding critical experiments that support the main claims to an appendix is inappropriate. 

%%%%%%%%%%%%%%%%%%%%%%%%%%%%%%%%%%%%%%%%%%%%%%%%%%%%%%%%%%%%

\newpage

\bibliographystyle{plainnat}
\bibliography{reference}

\input{sections/appendix}
% \newpage
% \input{checklist}
\end{document}

%% file: sections/1_introduction.tex
\section{Introduction}

Federated fine-tuning~\cite{FedAvg, FedIT} has emerged as an increasingly important paradigm for adapting foundation models under decentralized data and client-side resource constraints.
In this setting, parameter-efficient fine-tuning (PEFT)~\cite{LoRA, DoRA, AdaLoRA} is especially attractive, since only a small fraction of model parameters needs to be updated and communicated.
Among PEFT methods, LoRA~\cite{LoRA} is particularly compelling: it represents adaptation as a low-rank update and therefore offers the promise of both lightweight local training and lightweight server aggregation.
This makes federated LoRA a natural candidate for scaling LLM adaptation across heterogeneous clients.

Despite this promise, federated LoRA exposes a fundamental mismatch between \emph{what is cheap to communicate} and \emph{what is meaningful to aggregate}.
A LoRA update is represented as $\Delta W = BA$, but this factorization is not unique:
the same update matrix admits infinitely many gauge-equivalent realizations $(B,A)$ and $(BQ,Q^{-1}A)$.
In centralized optimization, this ambiguity is often less exposed because the factors remain internal variables of a single optimizer.
In federated optimization, by contrast, LoRA factors are transmitted across clients and reused as server-side aggregation objects.
At that point, the ambiguity is no longer merely algebraic; it becomes a semantic issue of the server update rule itself.
If the server aggregates raw factors directly, then identical intrinsic client updates may still induce different server outputs under gauge-equivalent reparameterizations.
In other words, the aggregation rule is no longer defined on the update matrices themselves, but on arbitrary coordinate realizations.

A substantial line of recent federated LoRA work~\cite{FFA-LoRA, FedEx-LoRA, FedSA-LoRA, LoRA-Fair} has focused on the \emph{inexact aggregation} problem, namely the mismatch between aggregating products and aggregating factors.
This line of work is important, but it leaves a more basic question unresolved:
before asking whether a factor-level rule recovers the correct aggregate, one must first ask whether raw factors are semantically valid aggregation objects at all.
A straightforward repair for gauge ambiguity in federated LoRA is to aggregate induced updates $\Delta W_i = B_iA_i$ directly and then refactorize the result for redistribution~\cite{FlexLoRA}.
This avoids the semantic defect of raw factor aggregation, but it also forces the server back into dense-update materialization and matrix factorization.
For large transformer layers, this means forming and decomposing matrices in $\mathbb{R}^{d_{\mathrm{out}}\times d_{\mathrm{in}}}$, which is increasingly unattractive at LLM scale and misaligned with the low-rank efficiency premise that motivates federated PEFT in the first place.

This paper is built around the following question:
\emph{Can we preserve the semantic correctness of update-level aggregation without collapsing back to dense-update aggregation?}
We answer this question by shifting the focus from aggregation rules to \emph{server representations}.
Our key idea is that the federated server should neither aggregate raw factor coordinates nor materialize full dense updates.
Instead, it should maintain a \emph{gauge-aware low-rank representation} that captures the geometry of current-round client updates and expresses them under a shared reference frame.

We instantiate this principle in \textbf{GLoRA}, a gauge-aware low-rank server representation for federated LoRA.
GLoRA first rewrites each client update in subspace--coordinate form by applying a gauge fixing step: $B_i=U_iR_i$ and $A'_i=R_iA_i$, where $U_i$ is an orthonormal basis for the client's update subspace.
The server then aggregates the gauge-invariant projectors $U_iU_i^\top$ to estimate a consensus subspace $U_{\mathrm{ref}}$.
Finally, each client update is translated into this shared reference frame and aggregated through comparable coordinates.
The resulting server state $(U_{\mathrm{ref}},A_{\mathrm{global}})$ remains low-rank, yet corresponds exactly to the projection of the dense update average onto the learned consensus subspace.
For heterogeneous clients, GLoRA reads out adapters of different ranks from the same server state, avoiding separate aggregation objects or dense refactorization.  We evaluate GLoRA on GLUE~\cite{GLUE} and SuperNI~\cite{SuperNI} under data heterogeneity, heterogeneous client ranks, sparse participation, larger backbones, and unseen-task evaluation.
Across these settings, GLoRA consistently improves over federated LoRA baselines based on direct factor aggregation and achieves a favorable efficiency--performance trade-off relative to dense-update aggregation methods.
These results suggest that the key design choice in federated LoRA is not merely how to average low-rank factors, but what server-side object should represent the aggregate update.  Contributions are summarized as follows:
\begin{itemize}
    \item We identify \emph{gauge dependence} as a semantic failure mode of factor-level federated LoRA aggregation. Unlike the inexact aggregation problem, which asks whether a factor-level rule recovers a desired product-level aggregate, gauge dependence asks a more basic question: whether raw LoRA factors are representation-invariant aggregation objects at all.

    \item We propose \textbf{GLoRA}, a gauge-aware aggregation method for federated LoRA. 
GLoRA maintains a compact low-rank server state, aggregates client updates through a consensus subspace and shared reference coordinates, and instantiates heterogeneous clients via rank-compatible readout, avoiding both raw factor aggregation and dense-update materialization.

    \item We conduct extensive experiments on GLUE and SuperNI under data, resource, and task heterogeneity with sparse participation. Results show that GLoRA consistently improves over federated LoRA baselines, scales to larger backbones, and achieves a favorable efficiency--performance trade-off compared with dense-update aggregation.
\end{itemize}

%% file: sections/2_related_work.tex
\section{Related Work}

\begin{figure}
\vspace{-1mm}
    \centering
    \includegraphics[width=0.8\linewidth]{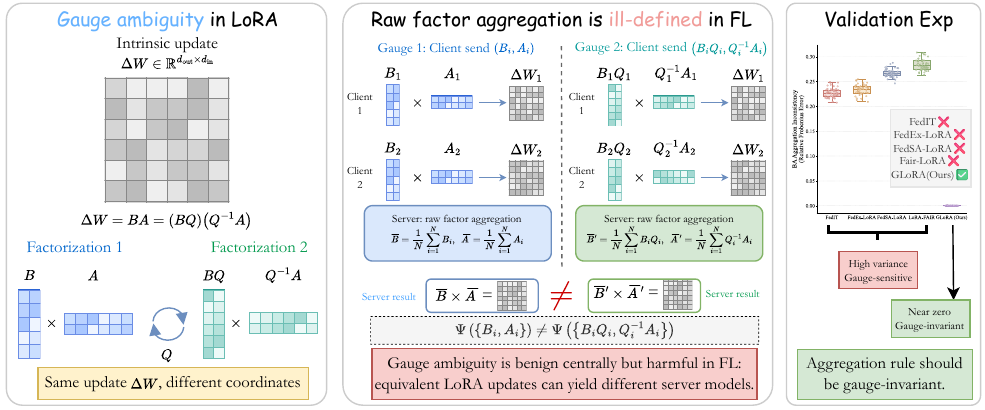}
\caption{
Motivation of gauge-aware LoRA aggregation in FL.
Gauge-equivalent factors, e.g., $(B,A)$ and $(BQ,Q^{-1}A)$, represent the same update
$\Delta W=BA$ but can yield different server models under raw factor aggregation.
The validation experiment shows that existing factor-aggregation methods are gauge-sensitive,
whereas GLoRA achieves near-zero aggregation inconsistency.
}
    \label{fig:motivation}
    \vspace{-4mm}
\end{figure}

\paragraph{Parameter-Efficient Fine-Tuning.}
Parameter-efficient fine-tuning (PEFT) adapts large pre-trained models by updating only a small subset of parameters.
Representative directions include adapter-based tuning~\citep{houlsby2019parameter}, prompt/prefix-based tuning~\citep{li2021prefix,lester2021power}, and multiplicative modulation methods such as IA$^3$~\citep{liu2022few}.
Among them, LoRA~\citep{LoRA} has become particularly influential due to its low-rank parameterization and zero additional inference latency after merging.
Subsequent variants mainly improve LoRA in centralized settings, such as adaptive rank allocation in AdaLoRA~\citep{AdaLoRA}, quantization-aware fine-tuning in QLoRA~\citep{dettmers2023qlora}, and enhanced decomposition in DoRA~\citep{DoRA}.
These methods substantially improve PEFT efficiency and expressiveness, but they do not address the server-side semantic validity of aggregating LoRA updates across federated clients.

\textbf{Federated LoRA.}
Recent work incorporates LoRA into federated fine-tuning of foundation models.
A first line of research directly aggregates LoRA factors or modifies factor-level aggregation rules, including FedIT~\citep{FedIT}, FFA-LoRA~\citep{FFA-LoRA}, FedSA-LoRA~\citep{FedSA-LoRA}, LoRA-FAIR~\citep{LoRA-Fair}, and FedEx-LoRA~\citep{FedEx-LoRA}.
These methods mainly focus on reducing aggregation error, improving robustness, or addressing the algebraic \emph{inexact aggregation} problem.
Our work differs in viewpoint: instead of asking how to aggregate LoRA factors more accurately, we ask a logically prior question---whether raw factor coordinates are semantically valid server-side aggregation objects at all under gauge ambiguity. A second line of work studies heterogeneity in federated LoRA, especially heterogeneous client ranks and system budgets.
HetLoRA~\citep{HetLoRA}, FLoRA~\citep{FLoRA}, and FlexLoRA~\citep{FlexLoRA} design rank-adaptive or resource-aware redistribution mechanisms for heterogeneous clients. These methods highlight the practical importance of client heterogeneity, but they do not directly resolve the representation issue raised in this paper. In contrast, we treat heterogeneous rank not as the starting point of the problem, but as a design constraint that the server-side representation must naturally support.

%% file: sections/3_motivation.tex
\section{Rethinking LoRA in Distributed Optimization}
\label{sec:rethinking}

\subsection{LoRA Is an Equivalence-Class Parameterization}
\label{subsec:lora_equiv}

LoRA represents an adaptation matrix as a low-rank product
$
\Delta W = BA$,
where $B \in \mathbb{R}^{d_{\mathrm{out}} \times r}$ and
$A \in \mathbb{R}^{r \times d_{\mathrm{in}}}$ with
$r \ll \min(d_{\mathrm{out}}, d_{\mathrm{in}})$.
This representation is efficient but not unique.
For any invertible matrix $Q \in \mathrm{GL}(r)$,
\begin{equation}
BA = (BQ)(Q^{-1}A).
\label{eq:gauge_equiv}
\end{equation}
Hence, a LoRA update is not identified by a unique factor pair, but by an equivalence class of factorizations that induce the same update matrix.

\begin{definition}[Gauge-equivalent LoRA factorizations]
\label{def:gauge_equiv}
Two factor pairs $(B,A)$ and $(\widetilde B,\widetilde A)$ are \emph{gauge-equivalent} if there exists an invertible matrix $Q \in \mathrm{GL}(r)$ such that
$
\widetilde B = BQ,
\widetilde A = Q^{-1}A.
$
Their equivalence class is
$
[(B,A)] \;=\; \{(BQ,Q^{-1}A): Q \in \mathrm{GL}(r)\}.
$
\end{definition}

This ambiguity is often less exposed in centralized training, where the factors remain internal variables of a single optimizer.
In distributed optimization, however, LoRA factors are no longer private coordinates:
they are transmitted across clients and must serve as server-side aggregation objects.
At that point, gauge ambiguity is no longer merely an algebraic nuisance.
It becomes a semantic issue of the server update rule itself.

\subsection{Distributed LoRA Requires a Gauge-Invariant Server Update Rule}
\label{subsec:server_state}

Consider round $t$ with active client set $\mathcal C_t$.
Each client $i \in \mathcal C_t$ produces a local LoRA update
$
\Delta W_i^t = B_i^t A_i^t,
$
where $(B_i^t,A_i^t)$ is only one representative of the equivalence class $[(B_i^t,A_i^t)]$.
A distributed LoRA method may therefore be abstracted as a server update rule
$
\Omega^{t+1}
=
\Psi\!\left(\{(B_i^t,A_i^t)\}_{i\in\mathcal C_t},\, \Omega^t\right),
$
where the central question is whether $\Psi$ depends on the intrinsic client updates $\{\Delta W_i^t\}$, or on the arbitrary factor coordinates used to express them.

\begin{definition}[Gauge-invariant state update]
\label{def:rep_invariance}
A server update rule $\Psi$ is \emph{gauge-invariant} if, for any round $t$ and any collection of invertible matrices $\{Q_i\}_{i\in\mathcal C_t}$,
$
\Psi\!\left(\{(B_i^t,A_i^t)\}_{i\in\mathcal C_t}, \Omega^t\right)
=
\Psi\!\left(\{(B_i^tQ_i,Q_i^{-1}A_i^t)\}_{i\in\mathcal C_t}, \Omega^t\right).
$
\end{definition}

Definition~\ref{def:rep_invariance} formalizes the minimal requirement for distributed LoRA:
the server should respond to the underlying updates themselves, rather than to arbitrary factor coordinates.
This immediately explains why direct factor-space aggregation is problematic.
Any rule that operates explicitly on $(B_i,A_i)$, rather than only on $\Delta W_i = B_iA_i$, can in general change its output under a gauge-equivalent reparameterization, even when the underlying client updates remain fixed (See Fig.~\ref{fig:motivation} panel~3).

\paragraph{Discussion: Gauge invariance vs. inexact aggregation.}
Existing federated LoRA methods often emphasize the \emph{inexact aggregation} issue, i.e.,$
\sum_i B_iA_i \neq \Big(\sum_i B_i\Big)\Big(\sum_i A_i\Big)$,
which asks whether a factor-level rule can recover the desired update average.
We argue that gauge invariance is a more fundamental requirement.
Before asking how to aggregate LoRA factors, one must first ask whether raw factors are valid aggregation objects at all.
Under gauge-equivalent reparameterizations, the same intrinsic update $\Delta W_i=B_iA_i$ can be represented by different coordinates, so aggregating raw factors may yield different server updates for the same client updates.
This is a semantic failure, whereas inexact aggregation is an algebraic approximation error after a representation has already been fixed.

A straightforward remedy is to aggregate update matrices directly $
\Delta W_g^t=\sum_{i\in\mathcal C_t}p_i^t B_i^tA_i^t,
$
and then refactorize the result for redistribution.
While semantically valid, this reduces federated LoRA to dense-update aggregation, requiring the server to materialize and decompose a
$d_{\mathrm{out}}\times d_{\mathrm{in}}$ matrix.
Such a solution undermines the efficiency premise of PEFT at LLM scale.

This motivates our central design question:
\emph{Can we build a server-side low-rank representation that is gauge-invariant, avoids dense re-materialization, and supports heterogeneous client ranks?}
GLoRA answers this question by aggregating client updates through a shared consensus subspace and rank-compatible low-rank coordinates.

%% file: sections/4_method.tex
\section{GLoRA: Gauge-aware LoRA Aggregation for Distributed Optimization}
\label{sec:method}

The discussion above identifies the required server object: it should be defined
by the intrinsic client updates, but represented without dense materialization.
GLoRA realizes this object with a subspace--coordinate server state. At round $t$,
the server maintains $
\Omega^t=(U_{\mathrm{ref}}^t,Z_g^t),
$
where $U_{\mathrm{ref}}^t\in\mathbb{R}^{d_{\mathrm{out}}\times R}$ is an
orthonormal reference basis with rank budget $R$, and
$Z_g^t\in\mathbb{R}^{R\times d_{\mathrm{in}}}$ stores the aggregated coordinates
in this basis. Together they induce the low-rank server update $
\Delta W_g^t=U_{\mathrm{ref}}^tZ_g^t.$ The role of the two components is deliberately separated:
$U_{\mathrm{ref}}^t$ specifies the common update geometry to which clients are
aligned, while $Z_g^t$ stores the aggregated update after this alignment.
The rest of this section explains how GLoRA constructs and redistributes this
state: Sec.~\ref{subsec:gauge_fixing} extracts gauge-invariant update geometry,
Sec.~\ref{subsec:coord_agg} aggregates projected updates in the shared frame,
and Sec.~\ref{subsec:hetero_redist} reads out rank-compatible adapters for
heterogeneous clients.

\subsection{Gauge-Invariant Subspace Extraction and Consensus Geometry}
\label{subsec:gauge_fixing}

Each active client $i\in\mathcal C_t$ returns a LoRA update
$\Delta W_i^t = B_i^t A_i^t, 
B_i^t\in\mathbb{R}^{d_{\mathrm{out}}\times r_i}, 
A_i^t\in\mathbb{R}^{r_i\times d_{\mathrm{in}}},
$
where $r_i$ may vary across clients.
Since $(B_i^t,A_i^t)$ is only one representative of the equivalence class
inducing $\Delta W_i^t$, GLoRA first converts the update into a subspace--coordinate form and extracts its gauge-invariant column-subspace geometry via a reduced QR decomposition\footnote{We assume full column rank for notation. If $B_i^t$ is rank-deficient, replace $r_i$ with $\operatorname{rank}(B_i^t)$; the results still hold.}:
\begin{equation}
B_i^t = U_i^t T_i^t,\qquad
(U_i^t)^\top U_i^t = I_{r_i},\qquad
\widehat A_i^t = T_i^t A_i^t .
\label{eq:qr_decomposition}
\end{equation}
Then
$
\Delta W_i^t = U_i^t \widehat A_i^t .
$
This separates the update into a column subspace $\mathrm{span}(U_i^t)$ and
coordinates $\widehat A_i^t$ inside that subspace. The column subspace is the part of the update geometry that is comparable across
clients. We represent it by the orthogonal projector
$
P_i^t = U_i^t(U_i^t)^\top.
$

Unlike raw factors, $P_i^t$ is invariant to any invertible reparameterization
$B_i^t\mapsto B_i^tQ_i$, because such a transformation does not change the
subspace spanned by $B_i^t$.
The server therefore estimates the shared update geometry by
\begin{equation}
K^t = \sum_{i\in\mathcal C_t} p_i^t P_i^t
    = \sum_{i\in\mathcal C_t} p_i^t U_i^t(U_i^t)^\top ,
    \quad K^t\in\mathbb{R}^{d_{\mathrm{out}}\times d_{\mathrm{out}}}
\label{eq:subspace_covariance}
\end{equation}
where $p_i^t\ge 0$ and $\sum_{i\in\mathcal C_t}p_i^t=1$.
The reference basis is defined as
\begin{equation}
U_{\mathrm{ref}}^t = \mathrm{TopEig}_R(K^t),\qquad
(U_{\mathrm{ref}}^t)^\top U_{\mathrm{ref}}^t=I_R .
\label{eq:consensus_basis}
\end{equation}
Thus, $U_{\mathrm{ref}}^t$ captures the rank-$R$ consensus subspace most
consistently supported by current-round client updates. Importantly, this step does not require materializing the dense matrix
$K^t$.
In practice, the same eigenspace can be obtained from the thin matrix
$
M^t =
\big[\sqrt{p_i^t}U_i^t\big]_{i\in\mathcal C_t},
$
since $K^t=M^t(M^t)^\top$.
The server therefore operates on a matrix with only
$\sum_{i\in\mathcal C_t} r_i$ columns.

\begin{proposition}[Exactness under sufficient server rank.]
\label{prop:exact_sufficient_rank}
Let $\Delta W_i^t=U_i^t\widehat A_i^t$ with $(U_i^t)^\top U_i^t=I_{r_i}$, and let
$
r_{\cup}^t=\mathrm{rank}\big([U_i^t]_{i\in\mathcal C_t}\big)
$
be the dimension of the union span of participating client update subspaces.
If the server rank satisfies $R\ge r_{\cup}^t$, then GLoRA recovers the dense update average exactly:
$
U_{\mathrm{ref}}^t Z_g^t
=
\sum_{i\in\mathcal C_t} p_i^t \Delta W_i^t .
$
In particular, since $r_{\cup}^t\le \sum_{i\in\mathcal C_t}r_i$, the condition
$R\ge \sum_{i\in\mathcal C_t}r_i$ is sufficient for exact aggregation.
The proof is provided in Appendix~\ref{app:exactness_proof}.
\end{proposition}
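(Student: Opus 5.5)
The plan is to show that, under $R\ge r_{\cup}^t$, the reference basis $U_{\mathrm{ref}}^t$ spans a subspace containing every participating client's column space. The shared-frame aggregation then loses nothing. I take the coordinate aggregation of Sec.~\ref{subsec:coord_agg} to be translation into the reference frame followed by weighted averaging:
\[
Z_g^t=\sum_{i\in\mathcal C_t}p_i^t\,(U_{\mathrm{ref}}^t)^\top U_i^t\widehat A_i^t .
\]
This is consistent with the introduction's claim that the server state equals the projection of the dense average onto the consensus subspace. With this form,
\[
U_{\mathrm{ref}}^tZ_g^t=\Pi^t\sum_{i}p_i^t\Delta W_i^t,\qquad \Pi^t:=U_{\mathrm{ref}}^t(U_{\mathrm{ref}}^t)^\top .
\]
The proposition therefore reduces to showing $\Pi^t U_i^t=U_i^t$ for every $i$ with $p_i^t>0$. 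Clients with $p_i^t=0$ contribute nothing to either side and can be dropped.

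\textbf{Step 1: the range of $K^t$.} Use the thin factorization $K^t=M^t(M^t)^\top$ with $M^t=[\sqrt{p_i^t}U_i^t]_i$. Since $\mathrm{range}(MM^\top)=\mathrm{range}(M)$, we get
\[
\mathrm{range}(K^t)=\sum_{i:p_i^t>0}\mathrm{span}(U_i^t).
\]
This subspace has dimension $k\le r_{\cup}^t\le R$.

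\textbf{Step 2: the top-$R$ eigenspace.} $K^t$ is symmetric positive semidefinite, so its eigenvectors with nonzero eigenvalues form an orthonormal basis of $\mathrm{range}(K^t)$. There are exactly $k$ of these, and all their eigenvalues are strictly positive, hence larger than every eigenvalue in the null space. Any choice of $\mathrm{TopEig}_R(K^t)$ with $R\ge k$ must therefore include all $k$ of them, whatever tie-breaking is used among the remaining zero eigenvalues. Consequently $\mathrm{range}(K^t)\subseteq\mathrm{span}(U_{\mathrm{ref}}^t)$, which gives $\Pi^tU_i^t=U_i^t$. Here I implicitly assume $R\le d_{\mathrm{out}}$ so that an orthonormal $U_{\mathrm{ref}}^t$ exists.

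\textbf{Step 3: conclude.} Substituting $\Pi^tU_i^t=U_i^t$ gives $U_{\mathrm{ref}}^tZ_g^t=\sum_i p_i^tU_i^t\widehat A_i^t=\sum_ip_i^t\Delta W_i^t$. The final claim follows from
\[
r_{\cup}^t=\mathrm{rank}\big([U_i^t]_i\big)\le\sum_i r_i .
\]
In the rank-deficient case of the footnote, $U_i^t$ has $\mathrm{rank}(B_i^t)$ columns and the same argument applies verbatim.

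\textbf{Main obstacle.} The main obstacle is Step 2: making rigorous that $\mathrm{TopEig}_R$ is well defined despite possible degenerate zero eigenvalues, and that it necessarily captures all of $\mathrm{range}(K^t)$. A secondary point is confirming that the definition of $Z_g^t$ in Sec.~\ref{subsec:coord_agg} indeed has the projected form above. If it instead uses a per-client alignment map, I would rewrite that map as $(U_{\mathrm{ref}}^t)^\top U_i^t$ and reduce to the same projection identity.
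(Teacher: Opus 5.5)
Your proposal is correct and follows essentially the same route as the paper's proof. Both arguments identify $\mathrm{range}(K^t)=\mathrm{range}(M^t)$ as the union span, use $R\ge r_\cup^t$ to conclude that $U_{\mathrm{ref}}^t$ contains the whole nonzero eigenspace so that $U_{\mathrm{ref}}^t(U_{\mathrm{ref}}^t)^\top U_i^t=U_i^t$, and substitute this into $Z_g^t=\sum_i p_i^t(U_{\mathrm{ref}}^t)^\top U_i^t\widehat A_i^t$. Your extra care about clients with $p_i^t=0$, tie-breaking among zero eigenvalues, and the implicit requirement $R\le d_{\mathrm{out}}$ only makes explicit points the paper leaves unstated (it simply assumes $p_i^t>0$).
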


\subsection{Projected Update Aggregation in Shared Coordinates}
\label{subsec:coord_agg}

After $U_{\mathrm{ref}}^t$ is fixed, each client update can be expressed in the
same reference frame as
$Z_i^t=(U_{\mathrm{ref}}^t)^\top U_i^t\widehat A_i^t
=(U_{\mathrm{ref}}^t)^\top\Delta W_i^t
\in\mathbb{R}^{R\times d_{\mathrm{in}}}$.
Its reconstruction,
$U_{\mathrm{ref}}^tZ_i^t
=U_{\mathrm{ref}}^t(U_{\mathrm{ref}}^t)^\top\Delta W_i^t$,
is exactly the orthogonal projection of client $i$'s update onto the consensus
subspace.

Since all $Z_i^t$ now share the same basis, they can be aggregated directly as
$Z_g^t=\sum_{i\in\mathcal C_t}p_i^tZ_i^t$, yielding the server update
$\Delta W_g^t=U_{\mathrm{ref}}^tZ_g^t$.
Combining the two expressions gives
$Z_g^t=(U_{\mathrm{ref}}^t)^\top\sum_{i\in\mathcal C_t}p_i^t\Delta W_i^t$,
and hence
\begin{equation}
\Delta W_g^t
=
U_{\mathrm{ref}}^t(U_{\mathrm{ref}}^t)^\top
\sum_{i\in\mathcal C_t}p_i^t\Delta W_i^t .
\label{eq:key_identity_proj}
\end{equation}

Equation~\eqref{eq:key_identity_proj} is the key identity of GLoRA.
It shows that GLoRA performs update-level aggregation, but only inside the learned
consensus subspace.
Therefore, the server update is defined by the intrinsic matrices
$\{\Delta W_i^t\}$ rather than by arbitrary LoRA coordinates, while the server
never materializes the dense average $\sum_i p_i^t\Delta W_i^t$.

\begin{proposition}[Gauge invariance of GLoRA aggregation]
\label{prop:gauge_invariance}
For any round $t$, the GLoRA server update $\Delta W_g^t$ is invariant to
client-side gauge reparameterizations. That is, replacing each submitted factor
pair $(B_i^t,A_i^t)$ by $(B_i^tQ_i,Q_i^{-1}A_i^t)$ for any invertible
$Q_i\in\mathrm{GL}(r_i)$ leaves $\Delta W_g^t$ unchanged.
\end{proposition}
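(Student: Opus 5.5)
The plan is to trace how each ingredient of the GLoRA update in \eqref{eq:key_identity_proj} changes under the substitution $(B_i^t,A_i^t)\mapsto(B_i^tQ_i,Q_i^{-1}A_i^t)$, and to show that none of them does. The server update is
$\Delta W_g^t=U_{\mathrm{ref}}^t(U_{\mathrm{ref}}^t)^\top\sum_i p_i^t\Delta W_i^t$,
so it depends on the factors only through two quantities: the client products $\Delta W_i^t$, and the reference basis $U_{\mathrm{ref}}^t$, which is itself built from the projectors $P_i^t$ through $K^t$. It is therefore enough to show that both are gauge-invariant, with one caveat about how $U_{\mathrm{ref}}^t$ enters.

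\textbf{Step 1 (products).} This step is immediate from \eqref{eq:gauge_equiv}: $(B_i^tQ_i)(Q_i^{-1}A_i^t)=B_i^tA_i^t$.

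\textbf{Step 2 (projectors).} Let $B_i^tQ_i=\widetilde U_i\widetilde T_i$ be the reduced QR factorization of the new factor. Because $Q_i$ is invertible, $\mathrm{span}(B_i^tQ_i)=\mathrm{span}(B_i^t)$, so $\widetilde U_i$ and $U_i^t$ are orthonormal bases of the same subspace. Hence $\widetilde U_i=U_i^tO_i$ for some orthogonal $O_i$, namely $O_i=(U_i^t)^\top\widetilde U_i$. It follows that $\widetilde U_i\widetilde U_i^\top=U_i^tO_iO_i^\top (U_i^t)^\top=P_i^t$. For the rank-deficient case mentioned in the footnote, the same argument applies using an orthonormal basis of the column space.

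\textbf{Step 3 (consensus matrix and projector).} The weights $p_i^t$ do not depend on the factors, so $K^t$ in \eqref{eq:subspace_covariance} is unchanged. Note, however, that $U_{\mathrm{ref}}^t$ enters $\Delta W_g^t$ only through the projector $U_{\mathrm{ref}}^t(U_{\mathrm{ref}}^t)^\top$. Any two orthonormal bases of the same $R$-dimensional eigenspace differ by right multiplication by an orthogonal matrix, and that factor cancels in the projector.

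\textbf{Main obstacle.} The delicate point is that $\mathrm{TopEig}_R(K^t)$ is well defined as a subspace only when $\lambda_R(K^t)>\lambda_{R+1}(K^t)$. I would handle this by stating the convention that $\mathrm{TopEig}_R$ is a deterministic function of the matrix $K^t$, for example a fixed eigensolver with fixed tie-breaking. Under that convention the same $K^t$ yields the same $U_{\mathrm{ref}}^t$, or at least the same projector, and invariance follows even when there are ties. In the generic case with a spectral gap, no convention is needed.

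The same caveat applies to the practical computation from $M^t$. Under a gauge change, $M^t$ becomes $M^t\,\mathrm{diag}(O_i)$, which leaves $M^t(M^t)^\top=K^t$ unchanged. The left singular subspace is therefore the same, and the implementation inherits the invariance.

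Combining Steps 1–3 in \eqref{eq:key_identity_proj}, $\Delta W_g^t$ is unchanged. The same argument also shows that $Z_g^t$ changes only by $O^\top Z_g^t$ under a basis rotation $U_{\mathrm{ref}}^t\mapsto U_{\mathrm{ref}}^tO$, while the product $U_{\mathrm{ref}}^tZ_g^t$ stays fixed.
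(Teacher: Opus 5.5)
Your proposal is correct and follows the paper's proof essentially step for step. Both arguments use the same chain: the products $B_i^tA_i^t$ are invariant, the client projectors are invariant via $\widetilde U_i=U_i^tO_i$ with $O_i$ orthogonal, hence so are $K^t$ and the projector $U_{\mathrm{ref}}^t(U_{\mathrm{ref}}^t)^\top$, and these are substituted into the key identity \eqref{eq:key_identity_proj}. Your extra caveat is a genuine refinement that the paper assumes without comment: $\mathrm{TopEig}_R(K^t)$ determines a unique subspace only when $\lambda_R(K^t)>\lambda_{R+1}(K^t)$, and otherwise needs a deterministic convention.
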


\subsection{Client-Aware Readout for Heterogeneous Ranks}
\label{subsec:hetero_redist}

The server maintains a single state
$\Omega^t=(U_{\mathrm{ref}}^t,Z_g^t)$, but client $i$ may only instantiate a
rank-$r_i$ adapter.
Thus, redistribution requires a rank-compatible readout from the same server
update.

We first put the server update into an energy-ordered low-rank form.
Because $U_{\mathrm{ref}}^t$ is orthonormal, it is sufficient to decompose the
small coordinate matrix:
\begin{equation}
Z_g^t = O^t\Sigma^t(V^t)^\top,\qquad
U_s^t = U_{\mathrm{ref}}^tO^t .
\label{eq:spectral_readout}
\end{equation}
Then $\Delta W_g^t=U_s^t\Sigma^t(V^t)^\top
=\sum_{j=1}^{\ell}\sigma_j^t u_j^t(v_j^t)^\top$, where
$\ell=\min(R,d_{\mathrm{in}})$.
A purely spectral readout would simply send the top-$r_i$ spectral components to
client $i$.
This is globally optimal for approximating $\Delta W_g^t$ in Frobenius norm, but
it may be suboptimal under data heterogeneity, where different clients benefit
from different directions.

GLoRA therefore uses a core--tail readout.
The core set $\mathcal G_i^t=\{1,\ldots,\lfloor\gamma r_i\rfloor\}$, with
$\gamma\in[0,1]$, preserves a common shared update.
The remaining rank budget is assigned according to client-specific update
geometry.
Let $H_i^{t-}$ be the latest available orthonormal basis of client $i$'s
historical update subspace.
For each non-core component, we compute the alignment score
$a_{ij}^t=\|(H_i^{t-})^\top u_j^t\|_2^2$.
The selected component set is
$
\mathcal I_i^t
=
\mathcal G_i^t\cup
\mathrm{TopK}_{r_i-|\mathcal G_i^t|}
\{a_{ij}^t:j\notin \mathcal G_i^t\}.
$
If no history is available, GLoRA falls back to the global spectral order. Given $\mathcal I_i^t$, the server initializes client $i$ with the balanced factorization
\begin{equation}
B_{i,\mathrm{init}}^t
=
U_s^t[:,\mathcal I_i^t]\mathrm{diag}\!\left(\sqrt{\sigma_{\mathcal I_i^t}^t}\right),
\qquad
A_{i,\mathrm{init}}^t
=
\mathrm{diag}\!\left(\sqrt{\sigma_{\mathcal I_i^t}^t}\right)
\left(V^t[:,\mathcal I_i^t]\right)^\top .
\label{eq:balanced_readout}
\end{equation}
This yields
$B_{i,\mathrm{init}}^tA_{i,\mathrm{init}}^t
=\sum_{j\in \mathcal I_i^t}\sigma_j^t u_j^t(v_j^t)^\top$.
The square-root split preserves the selected update while balancing magnitude
between the two LoRA factors, which provides a better initialization for
subsequent local optimization. This readout does not introduce client-specific server modules.
All clients receive different rank-limited views of the same gauge-aware server
state.
The shared core maintains global consistency, while the client-aware tail uses
the remaining rank budget to adapt to local update geometry. 

The pseudo-code of GLoRA is provided in the Appendix.

%% file: sections/5_experiments.tex
\section{Experiments}

\subsection{Cross-Device Federated Evaluation Setup}
\label{sec:exp_setup_cross_device}

We evaluate GLoRA under three practical sources of cross-device heterogeneity:
data distribution, client resources, and task semantics.

\begin{wrapfigure}{r}{0.4\textwidth}
    \centering
    \vspace{-30pt}
    \includegraphics[width=0.43\textwidth]{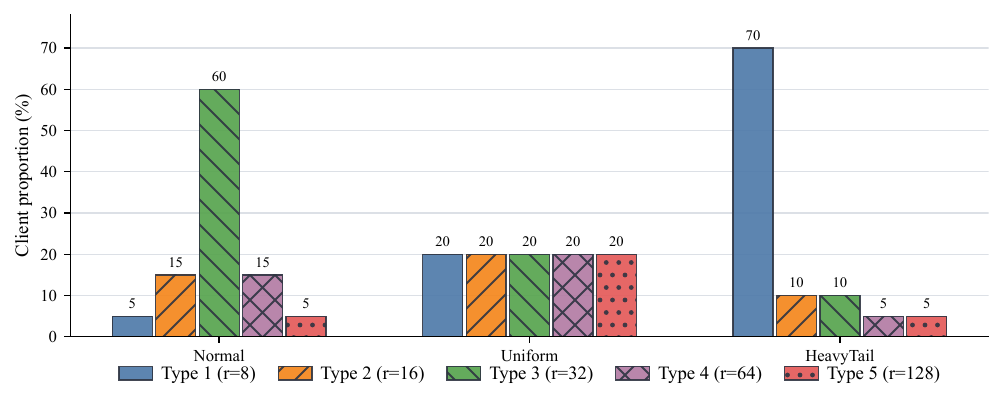}
    \vspace{-10pt}
    \caption{Client types and rank distributions for hetero-rank experiments.}
    \label{fig:rank_distribution}
    \vspace{-25pt}
\end{wrapfigure}

\vspace{-1mm}
\paragraph{Data heterogeneity.}
For GLUE~\cite{GLUE}, we create non-IID client partitions using $\mathrm{Dir}(\alpha)$ with
$\alpha\in\{0.1,0.5\}$, corresponding to highly and moderately heterogeneous
label distributions, respectively. 

\vspace{-1mm}
\paragraph{Resource heterogeneity.}
To simulate resource imbalance across devices, we define five client types with
different LoRA ranks, as shown in Figure~\ref{fig:rank_distribution}. Following
FlexLoRA~\cite{FlexLoRA}, we instantiate three population-level rank patterns: \emph{normal},
\emph{long-tail}, and \emph{uniform}. These settings evaluate whether aggregation
methods can handle heterogeneous adapter capacities.

\vspace{-1mm}
\paragraph{Task heterogeneity.}
We construct a task-heterogeneous benchmark from SuperNI~\cite{SuperNI}. Specifically, we sample
50 task categories from 76 categories and build 50 client groups from 1,600 tasks,
where each client is associated with a distinct task. We additionally hold out
20 unseen and unrelated tasks for unseen-client evaluation, measuring both client performance and generalization ability.

\vspace{-1mm}
\paragraph{Backbone models.}
For GLUE, we follow FedSA-LoRA~\cite{FedSA-LoRA} and FFA-LoRA~\cite{FFA-LoRA} and use RoBERTa-Base~\cite{RoBERTa}. For SuperNI, we
follow FlexLoRA~\cite{FlexLoRA} and use Data-Juicer LLaMA-1B-dj-refine-150B~\cite{JuicerLLama} as the default
backbone. We further evaluate scalability on Qwen2-7B~\cite{Qwen2} and Gemma-2-9B~\cite{Gemma2}. Unless otherwise specified, LoRA fine-tuning is applied only to the query and value projection layers. All experiments were conducted on dual RTX 5090 GPUs, except for the 7B and 9B models, which were run on an single RTX PRO 6000 GPU.

\subsection{Overall Performance with Homogeneous Ranks}
\label{sec:overall_homo_rank}

\paragraph{Implementation details.}
We evaluate
on five GLUE tasks: MNLI-m, MNLI-mm, SST-2, QQP, and QNLI. Following prior federated LoRA studies~\cite{FFA-LoRA,FedEx-LoRA}, for each task, the
training set is split across three clients using Dirichlet label skew with
$\alpha\in\{0.1,0.5\}$. We use full client participation, homogeneous LoRA rank $r=8$, local batch size 128, 10 local steps per round, and 1,000 communication rounds. All methods
are optimized with Adam. For fair comparison, we tune the learning rate over
$\{2\times10^{-4},5\times10^{-4},10^{-3},2\times10^{-3},5\times10^{-3}\}$ and
report the best result for each method.

\paragraph{Main results.}
As shown in Table~\ref{tab:glue_dirichlet}, GLoRA achieves the best average performance under both Dirichlet splits. Under the more challenging $\mathrm{Dir}(0.1)$ setting, it ranks first on four out of five tasks, demonstrating strong robustness to severe client distribution shift, while FedSA fails to converge on most tasks. Under $\mathrm{Dir}(0.5)$, GLoRA obtains the best result on every task. These results suggest that its gains stem not only from handling rank mismatch, but also from aggregating LoRA updates in a well-defined, gauge-aware representation space.

\begin{table*}[t]
\centering
\caption{Performance comparison on GLUE tasks under different Dirichlet splits with homogeneous ranks. The best results are highlighted in \textbf{bold}, and the second-best results are \underline{underlined}.}
\label{tab:glue_dirichlet}
\renewcommand{\arraystretch}{1.15}
\setlength{\tabcolsep}{5.5pt}
\definecolor{oursbg}{HTML}{F4EFFF}
\definecolor{headbg}{HTML}{F7F7F7}
\resizebox{\linewidth}{!}{
\begin{tabular}{llcccccc}
\toprule
\rowcolor{headbg}
\textbf{Split} & \textbf{Method} 
& \textbf{MNLI-m} & \textbf{MNLI-mm} & \textbf{SST-2} 
& \textbf{QQP} & \textbf{QNLI} & \textbf{AVG} \\
\midrule

\multirow{5}{*}{Dir(0.1)}
& FedIT~\cite{FedIT} 
& $71.74 {\scriptstyle \pm 0.45}$ 
& $71.68 {\scriptstyle \pm 0.37}$ 
& $93.69 {\scriptstyle \pm 0.24}$ 
& $83.98 {\scriptstyle \pm 0.37}$ 
& $83.91 {\scriptstyle \pm 0.20}$ 
& $81.00$ \\

& FFA-LoRA~\cite{FFA-LoRA} 
& $67.81 {\scriptstyle \pm 0.39}$ 
& $69.27 {\scriptstyle \pm 0.46}$ 
& $93.92 {\scriptstyle \pm 0.21}$ 
& $82.31 {\scriptstyle \pm 0.33}$ 
& $81.95 {\scriptstyle \pm 0.29}$ 
& $79.05$ \\

& FedEx-LoRA~\cite{FedEx-LoRA} 
& $\underline{71.95 {\scriptstyle \pm 0.48}}$ 
& $\underline{71.97 {\scriptstyle \pm 0.48}}$ 
& $\mathbf{95.27 {\scriptstyle \pm 0.17}}$ 
& $\underline{84.27 {\scriptstyle \pm 0.37}}$ 
& $\underline{84.17 {\scriptstyle \pm 0.35}}$ 
& $\underline{81.33}$ \\

& FedSA-LoRA~\cite{FedSA-LoRA} 
& $53.76 {\scriptstyle \pm 0.93}$ 
& $53.11 {\scriptstyle \pm 1.04}$ 
& $75.00 {\scriptstyle \pm 0.82}$ 
& $63.18 {\scriptstyle \pm 0.90}$ 
& $52.37 {\scriptstyle \pm 0.98}$ 
& $59.48$ \\

\rowcolor{oursbg}
& \textbf{GLoRA (Ours)}
& $\mathbf{72.89 {\scriptstyle \pm 0.41}}$ 
& $\mathbf{72.66 {\scriptstyle \pm 0.38}}$ 
& $\underline{94.49 {\scriptstyle \pm 0.26}}$ 
& $\mathbf{85.07 {\scriptstyle \pm 0.29}}$ 
& $\mathbf{84.42 {\scriptstyle \pm 0.29}}$ 
& $\mathbf{81.91}$ \\

\midrule

\multirow{5}{*}{Dir(0.5)}
& FedIT~\cite{FedIT} 
& $\underline{85.55 {\scriptstyle \pm 0.27}}$ 
& $\underline{85.34 {\scriptstyle \pm 0.33}}$ 
& $94.96 {\scriptstyle \pm 0.19}$ 
& $\underline{86.28 {\scriptstyle \pm 0.31}}$ 
& $89.95 {\scriptstyle \pm 0.29}$ 
& $88.42$ \\

& FFA-LoRA~\cite{FFA-LoRA} 
& $82.48 {\scriptstyle \pm 0.45}$ 
& $83.12 {\scriptstyle \pm 0.29}$ 
& $95.18 {\scriptstyle \pm 0.22}$ 
& $83.42 {\scriptstyle \pm 0.27}$ 
& $89.07 {\scriptstyle \pm 0.31}$ 
& $86.65$ \\

& FedEx-LoRA~\cite{FedEx-LoRA} 
& $84.66 {\scriptstyle \pm 0.37}$ 
& $84.64 {\scriptstyle \pm 0.30}$ 
& $\underline{95.64 {\scriptstyle \pm 0.15}}$ 
& $\underline{86.28 {\scriptstyle \pm 0.24}}$ 
& $\underline{90.94 {\scriptstyle \pm 0.29}}$ 
& $\underline{88.43}$ \\

& FedSA-LoRA~\cite{FedSA-LoRA} 
& $84.60 {\scriptstyle \pm 0.35}$ 
& $85.31 {\scriptstyle \pm 0.33}$ 
& $93.37 {\scriptstyle \pm 0.31}$ 
& $85.80 {\scriptstyle \pm 0.31}$ 
& $90.14 {\scriptstyle \pm 0.27}$ 
& $87.84$ \\

\rowcolor{oursbg}
& \textbf{GLoRA (Ours)}
& $\mathbf{85.71 {\scriptstyle \pm 0.31}}$ 
& $\mathbf{85.73 {\scriptstyle \pm 0.34}}$ 
& $\mathbf{96.33 {\scriptstyle \pm 0.13}}$ 
& $\mathbf{87.21 {\scriptstyle \pm 0.32}}$ 
& $\mathbf{91.82 {\scriptstyle \pm 0.25}}$ 
& $\mathbf{89.36}$ \\

\bottomrule
\end{tabular}
}
\end{table*}

\subsection{Overall Performance with Heterogeneous Ranks}
\label{sec:overall_hetero_rank}

\paragraph{Implementation details.}
We evaluate three representative rank distributions: normal, uniform, and heavy-tail as described in Section~\ref{sec:exp_setup_cross_device}. For each GLUE task, the training data are partitioned into 50 clients with $\mathrm{Dir}(0.5)$ label skew, and 50\% of clients participate in each round. Clients are assigned different LoRA ranks according to their resource types, while other configurations follow Section~\ref{sec:overall_homo_rank}.

\paragraph{Main results.}
As shown in Table~\ref{tab:rank_distribution}, GLoRA achieves the best average performance under all three rank distributions and ranks first on 14 out of 15 task-distribution pairs. This consistent advantage across normal, uniform, and heavy-tail rank profiles suggests that GLoRA is robust to heterogeneous adapter capacities. By aggregating updates in a shared gauge-aware subspace and redistributing them according to client capacity and update history, GLoRA better preserves intrinsic update information while remaining compatible with resource-constrained clients.

\begin{table*}[t]
\centering
\caption{Performance comparison on GLUE tasks with heterogeneous LoRA ranks under $\mathrm{Dir}(0.5)$ label skew. The best and second-best results are highlighted in \textbf{bold} and \underline{underlined}, respectively.}
\label{tab:rank_distribution}
\renewcommand{\arraystretch}{1.15}
\setlength{\tabcolsep}{5.5pt}
\definecolor{oursbg}{HTML}{F4EFFF}
\definecolor{headbg}{HTML}{F7F7F7}
\resizebox{\linewidth}{!}{
\begin{tabular}{llcccccc}
\toprule
\rowcolor{headbg}
\textbf{Distribution} & \textbf{Method} 
& \textbf{MNLI-m} & \textbf{MNLI-mm} & \textbf{SST-2} 
& \textbf{QQP} & \textbf{QNLI} & \textbf{Average} \\
\midrule

\multirow{3}{*}{Normal}
& HetLoRA~\cite{HetLoRA}
& $\underline{83.92 {\scriptstyle \pm 0.34}}$
& $\underline{84.22 {\scriptstyle \pm 0.31}}$
& $\mathbf{95.07 {\scriptstyle \pm 0.18}}$
& $\underline{83.52 {\scriptstyle \pm 0.36}}$
& $\underline{91.07 {\scriptstyle \pm 0.24}}$
& $\underline{87.56}$ \\

& FlexLoRA~\cite{FlexLoRA}
& $81.63 {\scriptstyle \pm 0.43}$
& $82.75 {\scriptstyle \pm 0.39}$
& $93.32 {\scriptstyle \pm 0.27}$
& $80.71 {\scriptstyle \pm 0.48}$
& $90.54 {\scriptstyle \pm 0.29}$
& $85.79$ \\

\rowcolor{oursbg}
& \textbf{GLoRA (Ours)}
& $\mathbf{85.57 {\scriptstyle \pm 0.28}}$
& $\mathbf{86.25 {\scriptstyle \pm 0.33}}$
& $\underline{94.04 {\scriptstyle \pm 0.23}}$
& $\mathbf{84.45 {\scriptstyle \pm 0.34}}$
& $\mathbf{91.40 {\scriptstyle \pm 0.22}}$
& $\mathbf{88.34}$ \\

\midrule

\multirow{3}{*}{Uniform}
& HetLoRA~\cite{HetLoRA}
& $\underline{82.74 {\scriptstyle \pm 0.38}}$
& $\underline{82.96 {\scriptstyle \pm 0.36}}$
& $\underline{94.61 {\scriptstyle \pm 0.20}}$
& $\underline{82.99 {\scriptstyle \pm 0.41}}$
& $\underline{89.57 {\scriptstyle \pm 0.31}}$
& $\underline{86.57}$ \\

& FlexLoRA~\cite{FlexLoRA}
& $81.89 {\scriptstyle \pm 0.46}$
& $82.74 {\scriptstyle \pm 0.42}$
& $93.67 {\scriptstyle \pm 0.18}$
& $78.57 {\scriptstyle \pm 0.57}$
& $89.54 {\scriptstyle \pm 0.34}$
& $85.28$ \\

\rowcolor{oursbg}
& \textbf{GLoRA (Ours)}
& $\mathbf{83.02 {\scriptstyle \pm 0.35}}$
& $\mathbf{83.66 {\scriptstyle \pm 0.32}}$
& $\mathbf{95.07 {\scriptstyle \pm 0.17}}$
& $\mathbf{83.39 {\scriptstyle \pm 0.39}}$
& $\mathbf{90.12 {\scriptstyle \pm 0.28}}$
& $\mathbf{87.05}$ \\

\midrule

\multirow{3}{*}{Heavy-tail}
& HetLoRA~\cite{HetLoRA}
& $\underline{85.37 {\scriptstyle \pm 0.30}}$
& $\underline{85.29 {\scriptstyle \pm 0.34}}$
& $\underline{94.18 {\scriptstyle \pm 0.25}}$
& $\underline{83.96 {\scriptstyle \pm 0.37}}$
& $\underline{90.81 {\scriptstyle \pm 0.26}}$
& $\underline{87.92}$ \\

& FlexLoRA~\cite{FlexLoRA}
& $82.71 {\scriptstyle \pm 0.44}$
& $82.72 {\scriptstyle \pm 0.41}$
& $90.09 {\scriptstyle \pm 1.06}$
& $83.17 {\scriptstyle \pm 0.43}$
& $88.54 {\scriptstyle \pm 0.36}$
& $85.45$ \\

\rowcolor{oursbg}
& \textbf{GLoRA (Ours)}
& $\mathbf{85.59 {\scriptstyle \pm 0.27}}$
& $\mathbf{85.84 {\scriptstyle \pm 0.29}}$
& $\mathbf{94.50 {\scriptstyle \pm 0.21}}$
& $\mathbf{84.70 {\scriptstyle \pm 0.33}}$
& $\mathbf{90.89 {\scriptstyle \pm 0.25}}$
& $\mathbf{88.30}$ \\

\bottomrule
\end{tabular}
}
\end{table*}

% \subsection{Generalization with Challenging Task Heterogeneity}
% \textbf{Implementation Details.} 我们这一节测试算法在任务异构叠加资源异构叠加稀疏参与场景下的性能与泛化能力。Following FlexLoRA, 我们采用Rouge-L作为生成评估指标。50个训练客户端每轮仅有10\%参与训练，每个客户端跑1个local epoch，共训练30个FL round. Local batchsize=4，Adam 优化器。lr在{1e-4, 2e-4, 5e-4, 1e-3}中search最优。Rank 异构的策略与3.3保持一致。

\subsection{Sparse Cross-Task FL with Heterogeneous Client Resources}
\label{sec:task_resource_sparse}

\vspace{-1mm}
\paragraph{Implementation details.}
We further evaluate GLoRA on SuperNI in a challenging cross-task FL setting where task heterogeneity, heterogeneous client resources, and sparse participation coexist. Following FlexLoRA~\cite{FlexLoRA}, we report ROUGE-L~\cite{ROUGE} on both in-domain training clients and unseen evaluation clients. We construct 50 training clients and 20 unseen evaluation clients from heterogeneous SuperNI tasks, with only 10\% client participation per round. Each selected client trains for one local epoch over 30 FL rounds. We use a local batch size of 4 and Adam optimizer, with the learning rate selected from $\{10^{-4}, 2\times10^{-4}, 5\times10^{-4}, 10^{-3}\}$. Rank distributions follow Section~\ref{sec:overall_hetero_rank}.

\vspace{-1mm}
\paragraph{Main results.}
Table~\ref{tab:indomain_unseen_generalization} shows that GLoRA achieves the best performance across all rank distributions on both in-domain and unseen clients. This demonstrates that GLoRA remains effective when task heterogeneity, resource heterogeneity, and sparse participation are jointly present. Beyond fitting the training clients, GLoRA also improves unseen-task generalization, suggesting that its gauge-aware subspace aggregation better preserves transferable update information than rank-specific factor aggregation or truncation. Fig.~\ref{fig:SuperNI_task} further confirms this trend at the task-category level, where GLoRA performs strongly across most seen and unseen categories under different rank distributions.

\begin{table*}[t]
\centering
\caption{ROUGE-L comparison on SuperNI under sparse participation and heterogeneous LoRA ranks. Results are reported on both in-domain training clients and unseen evaluation clients.}
\label{tab:indomain_unseen_generalization}
\renewcommand{\arraystretch}{1.15}
\setlength{\tabcolsep}{6.2pt}
\definecolor{oursbg}{HTML}{F4EFFF}
\definecolor{headbg}{HTML}{F7F7F7}
\resizebox{\linewidth}{!}{
\begin{tabular}{lcccccc}
\toprule
\rowcolor{headbg}
\multirow{2}{*}{\textbf{Method}} 
& \multicolumn{3}{c}{\textbf{In-domain Client Performance}} 
& \multicolumn{3}{c}{\textbf{Unseen Task Generalization}} \\
\cmidrule(lr){2-4} \cmidrule(lr){5-7}
\rowcolor{headbg}
& \textbf{Normal} & \textbf{Uniform} & \textbf{Heavy-tail}
& \textbf{Normal} & \textbf{Uniform} & \textbf{Heavy-tail} \\
\midrule

HetLoRA
& $\underline{50.31 {\scriptstyle \pm 0.42}}$
& $\underline{48.59 {\scriptstyle \pm 0.51}}$
& $\underline{49.49 {\scriptstyle \pm 0.48}}$
& $\underline{35.95 {\scriptstyle \pm 0.63}}$
& $\underline{33.04 {\scriptstyle \pm 0.78}}$
& $\underline{33.88 {\scriptstyle \pm 0.72}}$ \\

FlexLoRA
& $48.72 {\scriptstyle \pm 0.57}$
& $42.53 {\scriptstyle \pm 0.84}$
& $44.10 {\scriptstyle \pm 0.76}$
& $35.12 {\scriptstyle \pm 0.71}$
& $29.21 {\scriptstyle \pm 1.02}$
& $31.11 {\scriptstyle \pm 0.91}$ \\

\rowcolor{oursbg}
\textbf{GLoRA (Ours)}
& $\mathbf{52.26 {\scriptstyle \pm 0.36}}$
& $\mathbf{49.61 {\scriptstyle \pm 0.44}}$
& $\mathbf{50.42 {\scriptstyle \pm 0.41}}$
& $\mathbf{37.97 {\scriptstyle \pm 0.58}}$
& $\mathbf{34.95 {\scriptstyle \pm 0.69}}$
& $\mathbf{34.50 {\scriptstyle \pm 0.73}}$ \\

\bottomrule
\end{tabular}
}
\end{table*}

\begin{figure}
    \centering
    \includegraphics[width=0.8\linewidth]{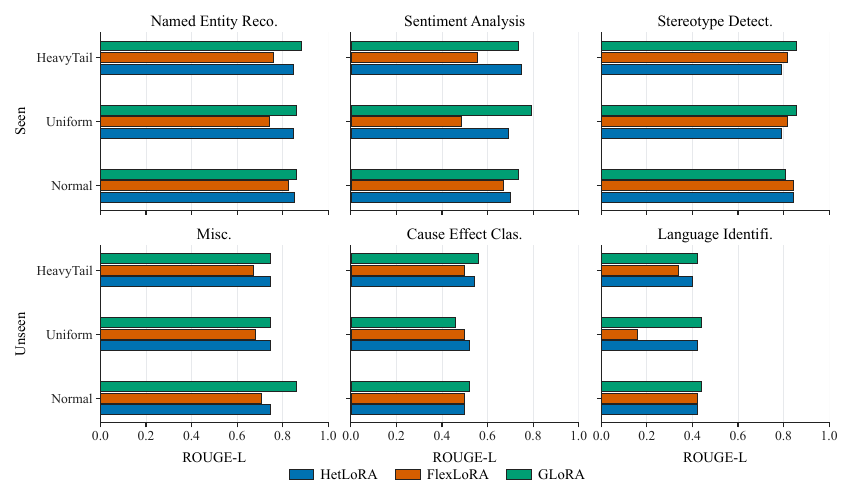}
\vspace{-1mm}
\caption{Per-category ROUGE-L performance on SuperNI across seen and unseen task categories.}
\label{fig:SuperNI_task}
\vspace{-3mm}
\end{figure}

\vspace{-1mm}
\subsection{In-depth Analysis}
\label{sec:indepth_analysis}

\vspace{-1mm}
\paragraph{Q1: How does gauge ambiguity affect federated LoRA optimization?}
As shown in Fig.~\ref{fig:conv_sst2}, methods that aggregate raw LoRA factors exhibit more unstable convergence under strong data heterogeneity, e.g., $\mathrm{Dir}(0.1)$. This is because gauge-equivalent LoRA updates can have different factor coordinates, making factor-level aggregation ill-defined. GLoRA mitigates this issue by aggregating intrinsic low-rank updates in a gauge-aware subspace, leading to smoother optimization. We note that GLoRA does not eliminate the inherent difficulty of data or rank heterogeneity; rather, it removes the additional instability introduced by gauge ambiguity.

\vspace{-1mm}
\paragraph{Q2: How does the rank budget $R$ affect performance?}
$R$ determines the size of the server consensus subspace. In experiments, we control it by a ratio $\rho$, i.e., $R=\rho\sum_{i\in\mathcal C_t} r_i$, where $r_i$ is the rank of participating client $i$. When $R$ covers the total participating rank, GLoRA recovers exact update aggregation; otherwise, it discards tail update directions. Fig.~\ref{fig:rank_budget} shows that GLoRA degrades gracefully as $R$ decreases on both GLUE and SuperNI, suggesting that leading consensus directions retain most useful update information and provide a practical efficiency--performance trade-off.

\vspace{-1mm}
\paragraph{Q3: How important is client-aware readout (CAR)?}
Fig.~\ref{fig:core_lr_landscape} shows that the core ratio matters more under severe data heterogeneity, where client updates contain stronger client-specific components. Nevertheless, GLoRA remains robust across a broad range of ratios. Under milder heterogeneity, performance becomes less sensitive, indicating that CAR does not require delicate tuning.

\vspace{-1mm}
\paragraph{Q4: Does GLoRA scale to larger backbones?}
Table~\ref{tab:large_backbone} reports additional GLUE results on Qwen2-7B and Gemma-2-9B with the same setting in Sec.~\ref{sec:overall_homo_rank}. GLoRA consistently maintains its advantage, suggesting that gauge-aware aggregation is not limited to small or medium backbones but remains effective for larger models.

\vspace{-1mm}
\paragraph{Q5: Are the core designs necessary?}
Table~\ref{tab:ablation_core_designs} ablates projected aggregation (PA) and client-aware readout (CAR). Removing either component degrades performance: projected aggregation provides a well-defined aggregation space, while client-aware readout adapts the shared update representation to heterogeneous client capacities. Their combination yields the strongest overall performance. 

\vspace{-1mm}
\paragraph{Q6: What is the efficiency of GLoRA?}
Table~\ref{tab:efficiency_comparison} shows that GLoRA achieves a favorable efficiency--effectiveness trade-off. It supports heterogeneous ranks without gauge-dependent factor aggregation and avoids the dense-update operations required by FlexLoRA. Per LoRA layer, its cost is $\Theta\!\left(r_\Sigma^2(d_{\mathrm{out}}+d_{\mathrm{in}})+r_\Sigma^3\right)$, where $r_\Sigma=\sum_{i\in\mathcal C_t}r_i$, making it much cheaper than dense-update aggregation. The ``Rounds'' column reports the rounds needed to reach $0.92$ accuracy on SST-2 under uniform ranks, where GLoRA with $\beta=0.5$ converges fastest. Details are in Appendix.

\begin{figure*}[t]
\centering
\captionsetup{font=small,labelfont=bf,skip=2pt}
\definecolor{oursbg}{HTML}{F4EFFF}
\definecolor{headbg}{HTML}{F7F7F7}

% ===================== Left column =====================
\begin{minipage}[t]{0.48\linewidth}
    \centering

    \includegraphics[width=\linewidth]{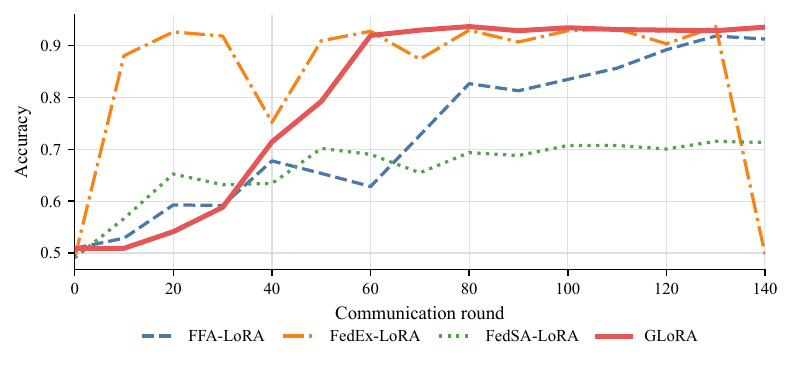}
    \captionof{figure}{Convergence on SST-2 under $\mathrm{Dir}(0.1)$.}
    \label{fig:conv_sst2}

    \vspace{0.55em}

    \includegraphics[width=0.92\linewidth]{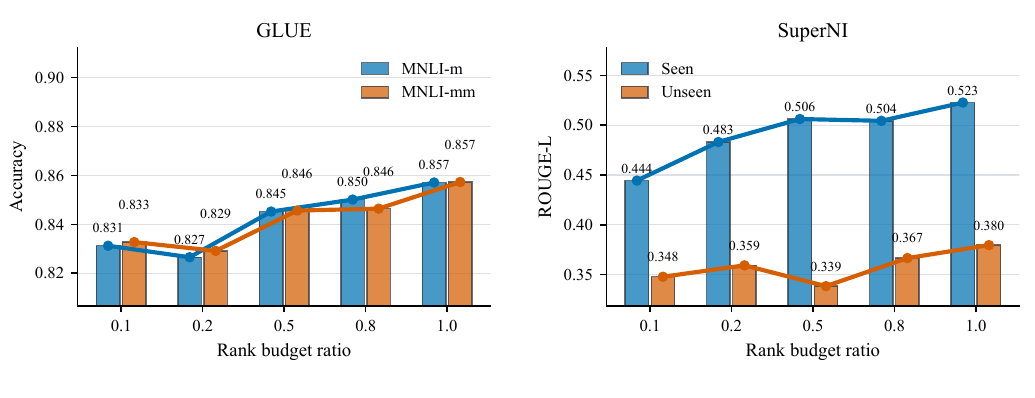}
    \captionof{figure}{Sensitivity to rank budget $R$.}
    \label{fig:rank_budget}
\end{minipage}
\hfill
% ===================== Right column =====================
\begin{minipage}[t]{0.50\linewidth}
    \centering

    \includegraphics[width=\linewidth]{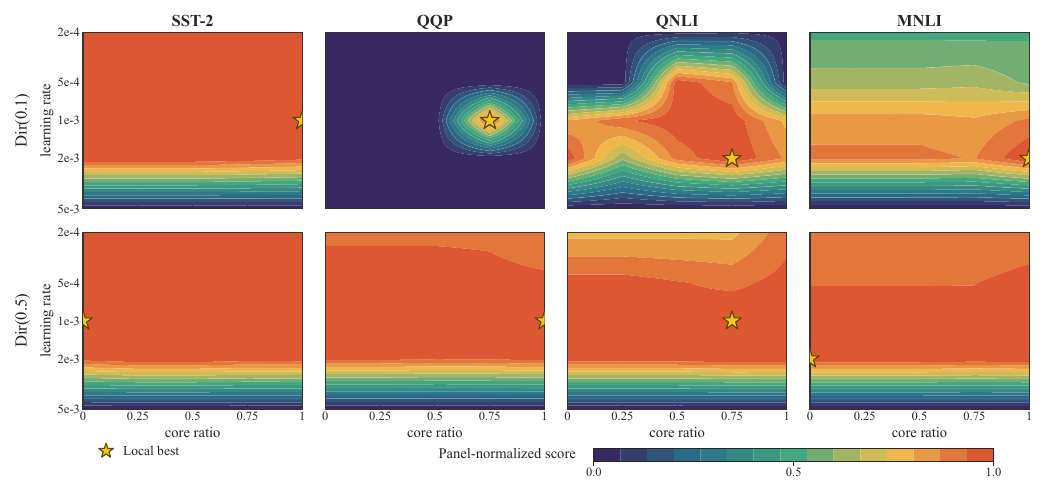}
    % \captionof{figure}{Core-ratio and learning-rate landscape.}
    \captionof{figure}{Hyperparameter landscape over core ratio $\gamma$.}
    \label{fig:core_lr_landscape}

    \vspace{0.1em}

    \captionof{table}{Ablation study of GLoRA. CAR is not applicable without consensus subspaces and is marked as N/A.}
    \label{tab:ablation_core_designs}

    \renewcommand{\arraystretch}{1.05}
    \setlength{\tabcolsep}{3.2pt}
    \resizebox{\linewidth}{!}{
    \begin{tabular}{cccccccc}
    \toprule
    \rowcolor{headbg}
    \multicolumn{2}{c}{\textbf{Design}} 
    & \multicolumn{3}{c}{\textbf{Homo Dir(0.1)}} 
    & \multicolumn{3}{c}{\textbf{Hetero Dir(0.5) Uniform}} \\
    \cmidrule(lr){1-2}
    \cmidrule(lr){3-5}
    \cmidrule(lr){6-8}
    \rowcolor{headbg}
    \textbf{PA} & \textbf{CAR} 
    & \textbf{MNLI-m} & \textbf{SST2} & \textbf{QNLI} 
    & \textbf{MNLI-m} & \textbf{SST2} & \textbf{QNLI} \\
    \midrule

    \xmark & \xmark 
    & $71.63$ & $93.98$ & $83.18$ 
    & $81.47$ & $94.06$ & $88.31$ \\

    \xmark & \cmark 
    & N/A & N/A & N/A 
    & N/A & N/A & N/A \\

    \cmark & \xmark 
    & $72.08$ & $94.22$ & $83.72$ 
    & $82.84$ & $94.47$ & $89.36$ \\

    \rowcolor{oursbg}
    \cmark & \cmark 
    & $\mathbf{72.84}$ & $\mathbf{94.53}$ & $\mathbf{84.37}$ 
    & $\mathbf{83.15}$ & $\mathbf{94.96}$ & $\mathbf{90.03}$ \\

    \bottomrule
    \end{tabular}
    }
\end{minipage}

\end{figure*}
\begin{table*}[t]
\centering
\captionsetup{font=small,labelfont=bf,skip=2pt}
\definecolor{oursbg}{HTML}{F4EFFF}
\definecolor{headbg}{HTML}{F7F7F7}

\begin{minipage}[t]{0.32\linewidth}
\centering
\caption{Larger-backbone performance under $\mathrm{Dir}(0.1)$.}
\label{tab:large_backbone}
\renewcommand{\arraystretch}{1.10}
\setlength{\tabcolsep}{4.6pt}
\resizebox{\linewidth}{!}{
\begin{tabular}{llcc}
\toprule
\rowcolor{headbg}
\textbf{Backbone} & \textbf{Method} & \textbf{QNLI} & \textbf{QQP} \\
\midrule

\multirow{4}{*}{Qwen2-7B}
& FedIT      & $86.62$ & $86.21$ \\
& FFA-LoRA   & $84.93$ & $84.67$ \\
& FedEx-LoRA & $\underline{87.38}$ & $\underline{86.94}$ \\
\rowcolor{oursbg}
& \textbf{GLoRA} & $\mathbf{88.24}$ & $\mathbf{87.73}$ \\

\midrule

\multirow{4}{*}{Gemma-2-9B}
& FedIT      & $87.25$ & $86.74$ \\
& FFA-LoRA   & $85.61$ & $85.18$ \\
& FedEx-LoRA & $\underline{88.02}$ & $\underline{87.42}$ \\
\rowcolor{oursbg}
& \textbf{GLoRA} & $\mathbf{88.83}$ & $\mathbf{88.28}$ \\

\bottomrule
\end{tabular}
}
\end{minipage}
\hfill
\begin{minipage}[t]{0.60\linewidth}
\centering
\caption{Efficiency comparison of federated LoRA aggregation methods. Homo: SST-2; Hetero: SuperNI with normal ranks.}
\label{tab:efficiency_comparison}
\renewcommand{\arraystretch}{1.12}
\setlength{\tabcolsep}{4.2pt}
\resizebox{\linewidth}{!}{
% \begin{tabular}{lccccc}
% \toprule
% \rowcolor{headbg}
% \multirow{2}{*}{\textbf{Method}}
% & \multicolumn{2}{c}{\textbf{Property}}
% & \multicolumn{3}{c}{\textbf{Efficiency}} \\
% \cmidrule(lr){2-3} \cmidrule(lr){4-6}
% \rowcolor{headbg}
% & \textbf{Complexity}
% & \textbf{Hetero}
% & \textbf{Homo}
% & \textbf{Hetero}
% & \textbf{Rounds} \\
% \midrule

% FedIT
% & $\Theta(nr(d+k))$
% & \xmark
% & $1.3$s
% & N/A
% & N/A \\

% HetLoRA
% & $\Theta(dkS)$
% & \cmark
% & $4.4$s
% & $2.9$s
% & 142 \\

% FlexLoRA
% & $\Theta(dk(S+\min(d,k)))$
% & \cmark
% & $70.7$s
% & $34.0$s
% & 275 \\

% \rowcolor{oursbg}
% \textbf{GLoRA} $(\beta=0.1)$
% & $\Theta(dS^2 + S^3 + SR(d+k))$
% & \cmark
% & $\mathbf{2.2}$s
% & $\mathbf{1.7}$s
% & 154 \\

% \rowcolor{oursbg}
% \textbf{GLoRA} $(\beta=0.5)$
% & $\Theta(dS^2 + S^3 + SR(d+k))$
% & \cmark
% & $3.7$s
% & $2.7$s
% & \textbf{104} \\

% \rowcolor{oursbg}
% \textbf{GLoRA} $(\beta=1)$
% & $\Theta(dS^2 + S^3 + SR(d+k))$
% & \cmark
% & $8.3$s
% & $12.3$s
% & 109 \\

% \bottomrule
% \end{tabular}
\begin{tabular}{lccccc}
\toprule
\rowcolor{headbg}
\multirow{2}{*}{\textbf{Method}}
& \multicolumn{2}{c}{\textbf{Property}}
& \multicolumn{3}{c}{\textbf{Efficiency}} \\
\cmidrule(lr){2-3} \cmidrule(lr){4-6}
\rowcolor{headbg}
& \textbf{Per-layer complexity}
& \textbf{Hetero}
& \textbf{Homo}
& \textbf{Hetero}
& \textbf{Rounds} \\
\midrule

FedIT
& $\Theta\!\left(nr(d_{\mathrm{out}}+d_{\mathrm{in}})\right)$
& \xmark
& $1.3$s
& N/A
& N/A \\

HetLoRA
& $\Theta\!\left(r_\Sigma d_{\mathrm{out}}d_{\mathrm{in}}\right)$
& \cmark
& $4.4$s
& $2.9$s
& 142 \\

FlexLoRA
& $\Theta\!\left(d_{\mathrm{out}}d_{\mathrm{in}}(r_\Sigma + m)\right)$
& \cmark
& $70.7$s
& $34.0$s
& 275 \\

\rowcolor{oursbg}
\textbf{GLoRA} $(\beta=0.1)$
& $\Theta\!\left(r_\Sigma^2(d_{\mathrm{out}}+d_{\mathrm{in}})+r_\Sigma^3\right)$
& \cmark
& $\mathbf{2.2}$s
& $\mathbf{1.7}$s
& 154 \\

\rowcolor{oursbg}
\textbf{GLoRA} $(\beta=0.5)$
& $\Theta\!\left(r_\Sigma^2(d_{\mathrm{out}}+d_{\mathrm{in}})+r_\Sigma^3\right)$
& \cmark
& $3.7$s
& $2.7$s
& \textbf{104} \\

\rowcolor{oursbg}
\textbf{GLoRA} $(\beta=1)$
& $\Theta\!\left(r_\Sigma^2(d_{\mathrm{out}}+d_{\mathrm{in}})+r_\Sigma^3\right)$
& \cmark
& $8.3$s
& $12.3$s
& 109 \\

\bottomrule
\end{tabular}
}
\end{minipage}
\end{table*}

\vspace{-1mm}
\section{Conclusion}

We presented \textbf{GLoRA}, a gauge-aware low-rank server representation for federated LoRA aggregation. By separating update geometry from factor coordinates, GLoRA avoids gauge-dependent factor aggregation, performs semantic aggregation in a shared consensus subspace, and supports heterogeneous clients through rank-compatible readout without dense-update materialization. Experiments on GLUE and SuperNI show that GLoRA consistently improves over federated LoRA baselines under data, resource, and task heterogeneity, while maintaining a favorable efficiency--performance trade-off. These results suggest that effective federated LoRA depends not only on averaging low-rank factors, but also on defining a meaningful server-side aggregation object. One limitation is that the current consensus subspace is estimated mainly from client column-space geometry; incorporating update-energy or task-aware weighting into subspace construction is an interesting direction for future work.

%% file: sections/appendix.tex
\newpage
\appendix
\section{Notation and Proofs}
\subsection{Proof of Proposition~\ref{prop:exact_sufficient_rank}}

\begin{proof}
\label{app:exactness_proof}
Recall that each participating client update can be written in the
subspace--coordinate form
\[
\Delta W_i^t = U_i^t \widehat A_i^t,
\qquad
(U_i^t)^\top U_i^t = I_{r_i}.
\]
Let
\[
\mathcal S^t
=
\mathrm{span}\big([U_i^t]_{i\in\mathcal C_t}\big)
\]
denote the union span of all participating client update subspaces, and let
\[
r_\cup^t=\dim(\mathcal S^t)
=
\mathrm{rank}\big([U_i^t]_{i\in\mathcal C_t}\big).
\]
The matrix used to estimate the consensus subspace is
\[
K^t
=
\sum_{i\in\mathcal C_t}p_i^t U_i^t(U_i^t)^\top .
\]
Equivalently, define the thin matrix
\[
M^t=\big[\sqrt{p_i^t}U_i^t\big]_{i\in\mathcal C_t}.
\]
Then
\[
K^t = M^t(M^t)^\top .
\]
Since \(p_i^t>0\) for participating clients, the column span of \(M^t\) is exactly
\(\mathcal S^t\). Therefore,
\[
\mathrm{range}(K^t)
=
\mathrm{range}(M^t)
=
\mathcal S^t,
\]
and hence
\[
\mathrm{rank}(K^t)=r_\cup^t.
\]

When the server rank satisfies \(R\ge r_\cup^t\), the reference basis
\[
U_{\mathrm{ref}}^t=\mathrm{TopEig}_R(K^t)
\]
contains an orthonormal basis of the entire nonzero eigenspace of \(K^t\).
Since the nonzero eigenspace of \(K^t\) is precisely \(\mathcal S^t\), we have
\[
\mathcal S^t \subseteq \mathrm{span}(U_{\mathrm{ref}}^t).
\]
Thus, for every client \(i\in\mathcal C_t\),
\[
U_{\mathrm{ref}}^t(U_{\mathrm{ref}}^t)^\top U_i^t = U_i^t,
\]
because every column of \(U_i^t\) lies in \(\mathcal S^t\).

By the definition of the aggregated coordinates,
\[
Z_g^t
=
\sum_{i\in\mathcal C_t}p_i^t Z_i^t
=
\sum_{i\in\mathcal C_t}
p_i^t (U_{\mathrm{ref}}^t)^\top U_i^t \widehat A_i^t .
\]
Multiplying both sides by \(U_{\mathrm{ref}}^t\), we obtain
\[
U_{\mathrm{ref}}^t Z_g^t
=
\sum_{i\in\mathcal C_t}
p_i^t
U_{\mathrm{ref}}^t(U_{\mathrm{ref}}^t)^\top U_i^t \widehat A_i^t .
\]
Using
\[
U_{\mathrm{ref}}^t(U_{\mathrm{ref}}^t)^\top U_i^t = U_i^t,
\]
this becomes
\[
U_{\mathrm{ref}}^t Z_g^t
=
\sum_{i\in\mathcal C_t}
p_i^t U_i^t \widehat A_i^t
=
\sum_{i\in\mathcal C_t}
p_i^t \Delta W_i^t .
\]
Therefore, GLoRA exactly recovers the dense update average whenever
\(R\ge r_\cup^t\).

Finally, since the rank of a concatenation is at most the sum of the ranks of its
blocks,
\[
r_\cup^t
=
\mathrm{rank}\big([U_i^t]_{i\in\mathcal C_t}\big)
\le
\sum_{i\in\mathcal C_t} r_i,
\]
the stronger condition
\[
R\ge \sum_{i\in\mathcal C_t}r_i
\]
is sufficient for exact aggregation.
\end{proof}

\subsection{Proof of Proposition~\ref{prop:gauge_invariance}}

\begin{proof}
Consider arbitrary client-side gauge transformations
\[
B_i^t \mapsto \widetilde B_i^t = B_i^t Q_i,
\qquad
A_i^t \mapsto \widetilde A_i^t = Q_i^{-1}A_i^t,
\]
where \(Q_i\in \mathrm{GL}(r_i)\) is invertible. The induced LoRA update is
unchanged:
\[
\widetilde B_i^t \widetilde A_i^t
=
B_i^t Q_i Q_i^{-1} A_i^t
=
B_i^t A_i^t
=
\Delta W_i^t .
\]
Thus, the intrinsic client update matrix remains the same.

Next, we show that the consensus subspace is also unchanged. Let
\[
B_i^t=U_i^tT_i^t
\]
be the reduced QR decomposition used by GLoRA. Since \(Q_i\) is invertible,
\[
\mathrm{span}(\widetilde B_i^t)
=
\mathrm{span}(B_i^tQ_i)
=
\mathrm{span}(B_i^t)
=
\mathrm{span}(U_i^t).
\]
Therefore, the orthonormal basis obtained from \(\widetilde B_i^t\) may differ
from \(U_i^t\), but it spans the same column space. Hence there exists an
orthogonal matrix \(S_i\in\mathbb R^{r_i\times r_i}\) such that
\[
\widetilde U_i^t = U_i^t S_i.
\]
Consequently, the corresponding orthogonal projector is invariant:
\[
\widetilde P_i^t
=
\widetilde U_i^t(\widetilde U_i^t)^\top
=
U_i^t S_i S_i^\top (U_i^t)^\top
=
U_i^t(U_i^t)^\top
=
P_i^t .
\]
It follows that the weighted projector aggregation matrix is unchanged:
\[
\widetilde K^t
=
\sum_{i\in\mathcal C_t}p_i^t \widetilde P_i^t
=
\sum_{i\in\mathcal C_t}p_i^t P_i^t
=
K^t .
\]
Therefore, the reference subspace computed from \(K^t\) is unchanged. In
particular, \(U_{\mathrm{ref}}^t\) spans the same consensus subspace before and
after the gauge transformation.

The GLoRA server update can be written as
\[
\Delta W_g^t
=
U_{\mathrm{ref}}^t(U_{\mathrm{ref}}^t)^\top
\sum_{i\in\mathcal C_t}p_i^t\Delta W_i^t .
\]
Both terms in this expression are invariant under the gauge transformations:
the projector
\[
U_{\mathrm{ref}}^t(U_{\mathrm{ref}}^t)^\top
\]
is unchanged because the consensus subspace is unchanged, and each intrinsic
client update \(\Delta W_i^t\) is unchanged as shown above. Therefore,
\[
\widetilde{\Delta W}_g^t
=
\Delta W_g^t .
\]
Hence, the GLoRA server update is invariant to arbitrary client-side gauge
reparameterizations.
\end{proof}
\begin{table}[t]
\centering
\caption{Summary of notation.}
\label{tab:notation}
\resizebox{0.85\linewidth}{!}{
\begin{tabular}{lll}
\toprule
Symbol & Meaning & Shape \\
\midrule
$t$ & Communication round & Scalar \\
$i$ & Client index & Scalar \\
$\mathcal C_t$ & Set of active clients at round $t$ & Set \\
$p_i^t$ & Aggregation weight of client $i$ at round $t$ & Scalar \\
$d_{\mathrm{in}}$ & Input dimension of the adapted linear layer & Scalar \\
$d_{\mathrm{out}}$ & Output dimension of the adapted linear layer & Scalar \\
$r$ & LoRA rank in the homogeneous setting & Scalar \\
$r_i$ & LoRA rank of client $i$ & Scalar \\
$R$ & Server-side rank budget & Scalar \\
\midrule
$\Delta W$ & LoRA adaptation matrix & $\mathbb{R}^{d_{\mathrm{out}}\times d_{\mathrm{in}}}$ \\
$B$ & LoRA left factor & $\mathbb{R}^{d_{\mathrm{out}}\times r}$ \\
$A$ & LoRA right factor & $\mathbb{R}^{r\times d_{\mathrm{in}}}$ \\
$Q$ & Invertible gauge transformation matrix & $\mathbb{R}^{r\times r}$ \\
$\mathrm{GL}(r)$ & General linear group of invertible $r\times r$ matrices & Set \\
$[(B,A)]$ & Gauge-equivalence class of LoRA factorizations & Set \\
\midrule
$\Delta W_i^t$ & Local LoRA update of client $i$ at round $t$ & $\mathbb{R}^{d_{\mathrm{out}}\times d_{\mathrm{in}}}$ \\
$B_i^t$ & Client $i$'s LoRA left factor at round $t$ & $\mathbb{R}^{d_{\mathrm{out}}\times r_i}$ \\
$A_i^t$ & Client $i$'s LoRA right factor at round $t$ & $\mathbb{R}^{r_i\times d_{\mathrm{in}}}$ \\
$Q_i$ & Client-specific gauge transformation & $\mathbb{R}^{r_i\times r_i}$ \\
$\Omega^t$ & Server state at round $t$ & Tuple \\
$\Psi$ & Server update rule & Function \\
\midrule
$U_i^t$ & Orthonormal basis of client $i$'s update subspace & $\mathbb{R}^{d_{\mathrm{out}}\times r_i}$ \\
$T_i^t$ & Upper-triangular factor from reduced QR decomposition of $B_i^t$ & $\mathbb{R}^{r_i\times r_i}$ \\
$\widehat A_i^t$ & Gauge-fixed coordinate matrix of client $i$ & $\mathbb{R}^{r_i\times d_{\mathrm{in}}}$ \\
$I_{r_i}$ & Identity matrix of size $r_i$ & $\mathbb{R}^{r_i\times r_i}$ \\
$P_i^t$ & Orthogonal projector onto client $i$'s update subspace & $\mathbb{R}^{d_{\mathrm{out}}\times d_{\mathrm{out}}}$ \\
$K^t$ & Weighted covariance / projector aggregation matrix for subspace estimation & $\mathbb{R}^{d_{\mathrm{out}}\times d_{\mathrm{out}}}$ \\
$M^t$ & Thin matrix used to obtain the eigenspace of $K^t$ efficiently & $\mathbb{R}^{d_{\mathrm{out}}\times \sum_{i\in\mathcal C_t} r_i}$ \\
$U_{\mathrm{ref}}^t$ & Server reference basis / consensus subspace & $\mathbb{R}^{d_{\mathrm{out}}\times R}$ \\
$I_R$ & Identity matrix of size $R$ & $\mathbb{R}^{R\times R}$ \\
$r_{\cup}^t$ & Dimension of the union span of participating client subspaces & Scalar \\
\midrule
$Z_i^t$ & Client $i$'s projected coordinates in the reference basis & $\mathbb{R}^{R\times d_{\mathrm{in}}}$ \\
$Z_g^t$ & Aggregated server coordinates in the reference basis & $\mathbb{R}^{R\times d_{\mathrm{in}}}$ \\
$\Delta W_g^t$ & Gauge-aware server update induced by $(U_{\mathrm{ref}}^t,Z_g^t)$ & $\mathbb{R}^{d_{\mathrm{out}}\times d_{\mathrm{in}}}$ \\
\midrule
$O^t$ & Left singular vectors of $Z_g^t$ in the coordinate space & $\mathbb{R}^{R\times \ell}$ \\
$\Sigma^t$ & Singular value matrix of $Z_g^t$ & $\mathbb{R}^{\ell\times \ell}$ \\
$V^t$ & Right singular vectors of $Z_g^t$ & $\mathbb{R}^{d_{\mathrm{in}}\times \ell}$ \\
$U_s^t$ & Server spectral basis after mapping back to the original output space & $\mathbb{R}^{d_{\mathrm{out}}\times \ell}$ \\
$\ell$ & Effective spectral rank, $\ell=\min(R,d_{\mathrm{in}})$ & Scalar \\
$\sigma_j^t$ & The $j$-th singular value of the server update & Scalar \\
$u_j^t$ & The $j$-th left spectral direction of the server update & $\mathbb{R}^{d_{\mathrm{out}}}$ \\
$v_j^t$ & The $j$-th right spectral direction of the server update & $\mathbb{R}^{d_{\mathrm{in}}}$ \\
\midrule
$\gamma$ & Core ratio for client-aware readout & Scalar in $[0,1]$ \\
$\mathcal G_i^t$ & Core spectral component set for client $i$ & Set \\
$H_i^{t-}$ & Latest historical update subspace basis of client $i$ & $\mathbb{R}^{d_{\mathrm{out}}\times h_i}$ \\
$a_{ij}^t$ & Alignment score between client $i$'s history and spectral direction $j$ & Scalar \\
$\mathcal I_i^t$ & Selected spectral component set for client $i$ & Set, $|\mathcal I_i^t|=r_i$ \\
$B_{i,\mathrm{init}}^t$ & Redistributed LoRA left factor for client $i$ & $\mathbb{R}^{d_{\mathrm{out}}\times r_i}$ \\
$A_{i,\mathrm{init}}^t$ & Redistributed LoRA right factor for client $i$ & $\mathbb{R}^{r_i\times d_{\mathrm{in}}}$ \\
\bottomrule
\end{tabular}
}
\end{table}

\begin{algorithm}[t]
\caption{GLoRA: Gauge-aware LoRA Aggregation}
\label{alg:glora}
\begin{algorithmic}[1]
\Require Initial model $W^0$; total communication rounds $T$; server rank budget $R$;
client ranks $\{r_i\}$; core ratio $\gamma$.
\State Initialize client adapter initializations and previous-round subspace bases $\{H_i^{0-}\}$.

\For{$t=0,1,\ldots,T-1$}
    \State Sample active client set $\mathcal C_t$.
    \State Send the current global model and the latest available adapter initialization
    to each client $i\in\mathcal C_t$.

    \ForAll{client $i\in\mathcal C_t$ \textbf{in parallel}}
        \State Perform local training and obtain LoRA factors
        \[
        B_i^t\in\mathbb R^{d_{\mathrm{out}}\times r_i},
        \qquad
        A_i^t\in\mathbb R^{r_i\times d_{\mathrm{in}}}.
        \]
        \State Compute reduced QR decomposition:
        \[
        B_i^t=U_i^tT_i^t,
        \qquad
        (U_i^t)^\top U_i^t=I_{r_i}.
        \]
        \State Compute gauge-fixed coordinates:
        \[
        \widehat A_i^t=T_i^tA_i^t.
        \]
        \State Send $(U_i^t,\widehat A_i^t)$ to the server.
    \EndFor

    \State $(U_{\mathrm{ref}}^t,Z_g^t)
    \leftarrow
    \textsc{ProjectedAggregate}
    \big(\{U_i^t,\widehat A_i^t,p_i^t\}_{i\in\mathcal C_t},R\big)$.

    \State $\{(B_{i,\mathrm{init}}^t,A_{i,\mathrm{init}}^t)\}_{i\in\mathcal D_t}
    \leftarrow
    \textsc{ClientAwareReadout}
    \big(U_{\mathrm{ref}}^t,Z_g^t,\{r_i\}_{i\in\mathcal D_t},
    \gamma,\{H_i^{t-}\}_{i\in\mathcal D_t}\big)$.

    \State Store current client subspaces for future rounds:
    \[
    H_i^{(t+1)-}\leftarrow U_i^t,\qquad \forall i\in\mathcal C_t.
    \]
\EndFor

\Ensure Final global model.
\end{algorithmic}
\end{algorithm}

\begin{algorithm}[t]
\caption{\textsc{ProjectedAggregate}}
\label{alg:projected_aggregate}
\begin{algorithmic}[1]
\Require Gauge-fixed client updates $\{U_i^t,\widehat A_i^t,p_i^t\}_{i\in\mathcal C_t}$;
server rank budget $R$.

\State Construct the thin weighted subspace matrix:
\[
M^t=
\big[\sqrt{p_i^t}U_i^t\big]_{i\in\mathcal C_t}.
\]

\State Compute the consensus reference basis:
\[
U_{\mathrm{ref}}^t
=
\mathrm{TopEig}_R\big(M^t(M^t)^\top\big),
\qquad
(U_{\mathrm{ref}}^t)^\top U_{\mathrm{ref}}^t=I_R.
\]

\ForAll{client $i\in\mathcal C_t$}
    \State Project the client update into the shared reference frame:
    \[
    Z_i^t
    =
    (U_{\mathrm{ref}}^t)^\top U_i^t\widehat A_i^t.
    \]
\EndFor

\State Aggregate shared coordinates:
\[
Z_g^t
=
\sum_{i\in\mathcal C_t}p_i^t Z_i^t.
\]

\State \Return $U_{\mathrm{ref}}^t$ and $Z_g^t$.
\end{algorithmic}
\end{algorithm}

\begin{algorithm}[t]
\caption{\textsc{ClientAwareReadout}}
\label{alg:client_aware_readout}
\begin{algorithmic}[1]
\Require Server state $(U_{\mathrm{ref}}^t,Z_g^t)$; target client ranks $\{r_i\}_{i\in\mathcal D_t}$;
core ratio $\gamma$; previous-round bases $\{H_i^{t-}\}_{i\in\mathcal D_t}$.

\State Compute thin SVD of the server coordinate matrix:
\[
Z_g^t=O^t\Sigma^t(V^t)^\top,
\qquad
\ell=\min(R,d_{\mathrm{in}}).
\]

\State Map the spectral basis back to the original output space:
\[
U_s^t=U_{\mathrm{ref}}^tO^t.
\]

\ForAll{client $i\in\mathcal D_t$}
    \State Define the shared core component set:
    \[
    \mathcal G_i^t
    =
    \{1,\ldots,\lfloor\gamma r_i\rfloor\}.
    \]

    \If{$H_i^{t-}$ is available}
        \State Compute alignment scores for non-core components:
        \[
        a_{ij}^t
        =
        \|(H_i^{t-})^\top u_j^t\|_2^2,
        \qquad j\notin\mathcal G_i^t.
        \]
        \State Select client-aware components:
        \[
        \mathcal I_i^t
        =
        \mathcal G_i^t
        \cup
        \mathrm{TopK}_{r_i-|\mathcal G_i^t|}
        \{a_{ij}^t:j\notin\mathcal G_i^t\}.
        \]
    \Else
        \State Select components by global spectral order:
        \[
        \mathcal I_i^t=\{1,\ldots,r_i\}.
        \]
    \EndIf

    \State Initialize client $i$ with balanced LoRA factors:
    \[
    B_{i,\mathrm{init}}^t
    =
    U_s^t[:,\mathcal I_i^t]
    \mathrm{diag}\!\left(\sqrt{\sigma_{\mathcal I_i^t}^t}\right),
    \]
    \[
    A_{i,\mathrm{init}}^t
    =
    \mathrm{diag}\!\left(\sqrt{\sigma_{\mathcal I_i^t}^t}\right)
    \left(V^t[:,\mathcal I_i^t]\right)^\top .
    \]
\EndFor

\State \Return $\{(B_{i,\mathrm{init}}^t,A_{i,\mathrm{init}}^t)\}_{i\in\mathcal D_t}$.
\end{algorithmic}
\end{algorithm}

\section{Pseudo-code of GLoRA}
We here present the pseudo-code of GLoRA. Here $\mathcal D_t$ denotes the set of clients for which the server prepares
rank-compatible adapter initializations. In practice, readout can be performed
lazily when a client is selected in the next round.
\section{Server-side Complexity}
\label{app:server_complexity}

We analyze the per-module server-side complexity of different federated LoRA
aggregation methods. Consider one LoRA module whose induced update has dimension
\[
\Delta W_i^t \in \mathbb{R}^{d_{\mathrm{out}}\times d_{\mathrm{in}}}.
\]
Let \(\mathcal C_t\) denote the participating client set at round \(t\), and
let \(r_i\) be the LoRA rank of client \(i\). Following the notation in the
main text, we define the total participating rank as
\[
r_\Sigma=\sum_{i\in\mathcal C_t}r_i .
\]
For homogeneous-rank methods, \(r_i=r\) and \(r_\Sigma=|\mathcal C_t|r\).
We also denote
\[
m=\min(d_{\mathrm{out}},d_{\mathrm{in}}).
\]
For the full model, the total cost is obtained by summing the following
per-module costs over all LoRA modules.

\paragraph{FedIT.}
FedIT directly averages homogeneous LoRA factors
\[
A_i^t\in\mathbb{R}^{r\times d_{\mathrm{in}}},
\qquad
B_i^t\in\mathbb{R}^{d_{\mathrm{out}}\times r}.
\]
For each client, the server accumulates
\(r d_{\mathrm{in}}+d_{\mathrm{out}}r\) entries. Therefore, the per-module
server cost is
\[
\mathcal{C}_{\mathrm{FedIT}}
=
\Theta\bigl(|\mathcal C_t|r(d_{\mathrm{out}}+d_{\mathrm{in}})\bigr)
=
\Theta\bigl(r_\Sigma(d_{\mathrm{out}}+d_{\mathrm{in}})\bigr).
\]
This method is efficient but only supports homogeneous ranks.

\paragraph{HetLoRA.}
HetLoRA supports heterogeneous ranks through truncation-based factor
aggregation. The factor aggregation and rank readout terms scale linearly with
the number of transmitted factor parameters. In the sparse-weighted version
considered in this paper, the server additionally computes weights based on
the dense low-rank products \(B_i^tA_i^t\). For client \(i\), forming
\[
B_i^tA_i^t:
(d_{\mathrm{out}}\times r_i)(r_i\times d_{\mathrm{in}})
\rightarrow
d_{\mathrm{out}}\times d_{\mathrm{in}}
\]
costs
\[
\Theta(d_{\mathrm{out}}d_{\mathrm{in}}r_i).
\]
Summing over all participating clients gives
\[
\Theta\left(d_{\mathrm{out}}d_{\mathrm{in}}
\sum_{i\in\mathcal C_t}r_i\right)
=
\Theta(d_{\mathrm{out}}d_{\mathrm{in}}r_\Sigma).
\]
Since \(d_{\mathrm{out}}\) and \(d_{\mathrm{in}}\) are typically much larger
than the LoRA ranks, this dense-product term dominates the lower-order
factor aggregation and readout costs. Hence,
\[
\mathcal{C}_{\mathrm{HetLoRA}}
=
\Theta(d_{\mathrm{out}}d_{\mathrm{in}}r_\Sigma).
\]

\paragraph{FlexLoRA.}
FlexLoRA first forms the dense semantic update
\[
\bar{\Delta W}^t
=
\sum_{i\in\mathcal C_t}p_i^tB_i^tA_i^t.
\]
Constructing the dense update costs
\[
\sum_{i\in\mathcal C_t}
\Theta(d_{\mathrm{out}}d_{\mathrm{in}}r_i)
=
\Theta(d_{\mathrm{out}}d_{\mathrm{in}}r_\Sigma).
\]
It then performs a thin SVD of
\[
\bar{\Delta W}^t\in\mathbb{R}^{d_{\mathrm{out}}\times d_{\mathrm{in}}},
\]
which costs
\[
\Theta(d_{\mathrm{out}}d_{\mathrm{in}}m),
\qquad
m=\min(d_{\mathrm{out}},d_{\mathrm{in}}).
\]
Therefore, the dominant per-module server cost is
\[
\mathcal{C}_{\mathrm{FlexLoRA}}
=
\Theta\bigl(d_{\mathrm{out}}d_{\mathrm{in}}(r_\Sigma+m)\bigr).
\]
This explains the large runtime overhead of FlexLoRA in Table~\ref{tab:efficiency_comparison}.

\begin{table}[!t]
\centering
\caption{Per-module server-side complexity. The table reports the dominant terms used in the main text.}
\label{tab:server_complexity}
\resizebox{0.5\linewidth}{!}{
\begin{tabular}{l c}
\toprule
Method & Server-side complexity per LoRA module \\
\midrule
FedIT &
\(\Theta\!\left(r_\Sigma(d_{\mathrm{out}}+d_{\mathrm{in}})\right)\) \\
HetLoRA &
\(\Theta\!\left(r_\Sigma d_{\mathrm{out}}d_{\mathrm{in}}\right)\) \\
FlexLoRA &
\(\Theta\!\left(d_{\mathrm{out}}d_{\mathrm{in}}(r_\Sigma+m)\right)\) \\
GLoRA &
\(\Theta\!\left(r_\Sigma^2(d_{\mathrm{out}}+d_{\mathrm{in}})+r_\Sigma^3\right)\) \\
\bottomrule
\end{tabular}
}
\end{table}
\paragraph{GLoRA.}
GLoRA avoids dense-update construction and performs aggregation in a
gauge-aware low-rank reference space. Each client update is first represented
in a gauge-fixed subspace--coordinate form
\[
B_i^t=U_i^tT_i^t,\qquad
\widehat A_i^t=T_i^tA_i^t,
\]
where
\[
U_i^t\in\mathbb{R}^{d_{\mathrm{out}}\times r_i},
\qquad
\widehat A_i^t\in\mathbb{R}^{r_i\times d_{\mathrm{in}}}.
\]
The QR-based gauge fixing is performed on the client side and is not counted
as server-side aggregation cost.

On the server, GLoRA forms the weighted concatenated basis
\[
M^t=
\big[\sqrt{p_i^t}U_i^t\big]_{i\in\mathcal C_t}
\in
\mathbb{R}^{d_{\mathrm{out}}\times r_\Sigma}.
\]
The Gram matrix
\[
G^t=(M^t)^\top M^t
\in\mathbb{R}^{r_\Sigma\times r_\Sigma}
\]
costs
\[
\Theta(d_{\mathrm{out}}r_\Sigma^2),
\]
and its eigendecomposition costs
\[
\Theta(r_\Sigma^3).
\]
The resulting reference basis \(U_{\mathrm{ref}}^t\) is then used to translate
client coordinates into the shared server frame. Under the common setting
\(R=O(r_\Sigma)\), this coordinate aggregation costs
\[
\Theta(r_\Sigma^2 d_{\mathrm{in}}).
\]
Therefore, the dominant per-module server cost of GLoRA is
\[
\mathcal{C}_{\mathrm{GLoRA}}
=
\Theta\!\left(
r_\Sigma^2(d_{\mathrm{out}}+d_{\mathrm{in}})
+
r_\Sigma^3
\right).
\]
This compact form matches the complexity reported in
Table~\ref{tab:efficiency_comparison}. Since \(r_\Sigma\ll d_{\mathrm{out}},d_{\mathrm{in}}\)
in LoRA adaptation, GLoRA avoids the dense-update and full-SVD operations
required by FlexLoRA while still supporting heterogeneous ranks.